\newcommand{\cY}{\mathcal{Y}}
\newcommand{\hatp}{\hat{p}}
\newcommand{\argmin}{\operatorname*{argmin}}
\newcommand{\argmax}{\operatorname*{argmax}}
\newcommand{\given}{\, | \,}
\newcommand{\fromto}{\longrightarrow}
\renewcommand{\to}{\longrightarrow}
\newcommand{\phat}{\hat{p}}
\renewcommand{\vec}[1]{\boldsymbol{#1}}
\newcommand{\ccssl}{CCSSL}
\newcommand{\kldiv}[2]{D_{KL}(#1 \, || \, #2)}
\crefname{section}{Sec.}{Secs.}
\Crefname{section}{Section}{Sections}
\Crefname{table}{Table}{Tables}
\crefname{table}{Tab.}{Tabs.}
\title{Conformal Credal Self-Supervised Learning}
  \author{\Name{Julian Lienen\nametag{\thanks{Corresponding author}}} \Email{julian.lienen@upb.de}\\
   \Name{Caglar Demir} \Email{caglar.demir@upb.de}\\
   \addr{Paderborn University, Germany}
   \AND
   \Name{Eyke H\"ullermeier} \Email{eyke@lmu.de}\\
   \addr{University of Munich (LMU), Germany}
   }
\begin{document}




\maketitle

\begin{abstract}
   In semi-supervised learning, the paradigm of self-training refers to the idea of learning from pseudo-labels suggested by the learner itself. Recently, corresponding methods have proven effective and achieve state-of-the-art performance, e.g., when applied to image classification problems. However, pseudo-labels typically stem from ad-hoc heuristics, relying on the quality of the predictions though without guaranteeing their validity. One such method, so-called credal self-supervised learning, maintains pseudo-supervision in the form of sets of (instead of single) probability distributions over labels, thereby allowing for a flexible yet uncertainty-aware labeling. Again, however, there is no justification beyond empirical effectiveness. To address this deficiency, we make use of conformal prediction, an approach that comes with guarantees on the validity of set-valued predictions. As a result, the construction of credal sets of labels is supported by a rigorous theoretical foundation, leading to better calibrated and less error-prone supervision for unlabeled data. Along with this, we present effective algorithms for learning from credal self-supervision. An empirical study demonstrates excellent calibration properties of the pseudo-supervision, as well as the competitiveness of our method on several image classification benchmark datasets.
\end{abstract}
\begin{keywords}
Credal sets, semi-supervised learning, inductive conformal prediction, self-training, label relaxation
\end{keywords}

\section{Introduction}
\label{sec:intro}

In the recent years, machine learning applications, particularly deep learning models, have benefited greatly from the extensive amount of available data. While traditional supervised learning methods commonly rely on curated, precise target information, there is an essentially unlimited availability of unlabeled data, e.g., crowd-sourced images in computer vision applications. Semi-supervised learning \citep{chapelle2009semi} aims to leverage this source of information for further optimizing models, as successfully demonstrated by a wide range of algorithms \citep{mixmatch,remixmatch,fixmatch,UDA}. Among other approaches, so-called \textit{self-training} \citep{lee2013pseudo} emerged as a simple yet effective paradigm to facilitate learning from additional unlabeled data. To this end, the model to be trained is used to predict ``pseudo-labels'' for the unlabeled data, which are then added to the labeled training data.

When learning probabilistic classifiers, e.g., to classify images, pseudo-labels can come in different forms. Traditionally, such supervision is typically in the form of a probability distribution, either degenerate (``one-hot'' encoded, hard) \citep{lee2013pseudo,fixmatch} or non-degenerate (soft) \citep{mixmatch,remixmatch}. As pointed out by \citet{lienen2021label}, this form can be questioned from a data modeling perspective: Not only is a single distribution unlikely to match the true ground-truth, and may hence to bias the learner, but also incapable of expressing uncertainty about the ground-truth distribution. 
While uncertainty-aware approaches address these shortcomings by accompanying the targets with additional qualitative information (e.g., as in \citep{DBLP:conf/iclr/RizveDRS21}), so-called \textit{credal self-supervised learning} (CSSL) \citep{cssl} replaces the targets by \textit{credal sets}, i.e., sets of probability distributions assumed to guarantee (or at least likely) covering the true distribution. Such set-valued targets, for which CSSL employs a generalized risk minimization to enable learning, relieve the learner from committing to a single distribution, while facilitating uncertainty-awareness through increased expressivity.

Although all of the aforementioned approaches have demonstrated their effectiveness to the problem of semi-supervised learning, none of them entails meaningful guarantees about the validity of the pseudo-labels\,---\,their quality is solely subject to the model confidence itself without an objective error probability. 
Moreover, rather ad-hoc heuristics are considered to control the influence of vague beliefs on the learning process (e.g., by selecting pseudo-labels surpassing a confidence threshold). For instance, CSSL derives credal pseudo-supervision based on the predicted probability scores.
Although it exhibits uncertainty-awareness by using entropy minimization (less entropy results in smaller sets), this may be misleading for overconfident yet poorly generalizing models. Again, no objective guarantee can be given for the validity of the credal set. This raises the quest for a pseudo-labeling procedure providing exactly that kind of guarantee, i.e., validating pseudo-supervision beyond mere empirical effectiveness.

Fortunately, the framework of conformal prediction (CP) \citep{vovk05,DBLP:journals/jmlr/ShaferV08} provides a tool-set to satisfy such demands.
As such, CP induces prediction regions quantifying the uncertainty for candidate values as outcome for a query instance by measuring their non-conformity (``strangeness'') with previously observed data with known outcomes.
Much like in classical hypothesis testing, it computes p-values for candidate outcomes (e.g., for each class $y'$) as a criterion for hypothesis rejection, i.e., whether to include a value $y'$ in the prediction set with a certain amount of confidence. Based on very mild technical assumptions, CP provides formal guarantees for the validity of prediction regions and is able to control the probability of an invalid prediction (not covering the true label).

In our work, we combine the technique of inductive conformal prediction, an efficient variant of CP, with the idea of credal self-supervised learning, leading to a method we dub \textit{conformal credal self-supervised learning} (\ccssl). Instead of deriving credal sets based on an ad-hoc estimate of the model confidence, we proceed from a possibilistic interpretation of conformal predictions \citep{DBLP:journals/ijar/CellaM22} to construct conformal credal labels. By this, pseudo-labels of that kind provide the validity guarantees implied by conformal predictors, laying a rigorous theoretical foundation for a set-valued pseudo-labeling strategy and thus paving the way for more thorough theoretical analyses of self-training in semi-supervised learning. To enable learning from conformal credal self-supervision, we further provide an effective algorithmic solution for generalized empirical risk minimization on set-valued targets.
An exhaustive empirical study on several image classification datasets demonstrates the usefulness of our method for effective and reliable semi-supervised learning\,---\,leveraging well calibrated pseudo-supervision, it is shown to improve the state-of-the-art in terms of generalization performance.

\section{Related Work}
\label{sec:related_work}

Semi-supervised learning describes the learning paradigm where the aim is to leverage the potential of unlabeled in addition to labeled data to improve learning and generalization. As it is easy to lose track of related work due to the wide range of proposed approaches in the recent years, we focus here on classification methods applied to computer vision applications that are closely related to our work. For a more comprehensive overview, we refer to \citep{chapelle2009semi} and \citep{DBLP:journals/ml/EngelenH20}.

Nowadays, so-called \textit{self-training} (or self-supervised) methods protrude among these methods by providing a simple yet effective learning methodology to make use of unlabeled data. Such methods can be found in a wide range of domains, including computer vision \citep{DBLP:conf/wacv/RosenbergHS05,Doersch2016,DBLP:conf/iccv/GodardAFB19,DBLP:conf/cvpr/XieLHL20} and natural language processing \citep{DBLP:conf/naacl/DuGGCCASC21}. As self-training can be considered as a general learning paradigm, where a model suggests itself labels to learn from, it has been wrapped around various model types, e.g., support vector machines \citep{Lienen2021instweighting}, decision trees \citep{Tanha2017} and most prominently with neural networks \citep{DBLP:conf/nips/OliverORCG18}. Notably, it lays the foundation for so-called distillation models, e.g., in self-distillation \citep{DBLP:conf/iccv/KimJYH21} or student-teacher setups \citep{DBLP:conf/cvpr/XieLHL20,DBLP:conf/cvpr/PhamDXL21}. It is further popular for unsupervised pretraining, e.g., as described in \citep{NEURIPS2020_f3ada80d,DBLP:conf/iccv/CaronTMJMBJ21}. \looseness=-1

Uncertainty-awareness \citep{DBLP:journals/ml/HullermeierW21} is a critical aspect for a cautious pseudo label selection to learn from, without exposing the model to the risk of confirmation biases \citep{softlabelsconfbias}. This aspect has been considered in previous works by different means. \citep{DBLP:conf/iclr/RizveDRS21} employs Bayesian sampling techniques, such as MC-Dropout \citep{mcdropout} or DropBlock \citep{DBLP:conf/nips/GhiasiLL18}, to estimate uncertainty of a prediction, which is then used as an additional filter criterion. \citep{DBLP:conf/nips/RenYS20} uses an adaptive instance weighting approach to control the influence of individual pseudo labels. Moreover, credal self-supervised learning \citep{cssl} expresses certainty by the size of maintained credal sets as pseudo-supervision. Also, learning from softened probability distribution can also be considered as a way to suppress overconfidence tendencies to learn in a more cautious way \citep{mixmatch,softlabelsconfbias}. Lastly, several domain-specific adaptions have been proposed, e.g., for text classification \citep{DBLP:conf/nips/MukherjeeA20} or semantic segmentation \citep{DBLP:journals/ijcv/ZhengY21}.

As prominently used throughout this work, conformal prediction \citep{vovk05,DBLP:journals/jmlr/ShaferV08,conf_pred} provides an elegant framework to naturally express model prediction uncertainty in a set-valued form. We refer the interested reader to  \citep{DBLP:journals/corr/abs-2005-07972} for a more comprehensive overview beyond the covered literature here.
Whereas conformal prediction initially proceeded from a transductive form in an online setting \citep{vovk05}, its high complexity called for more efficient variants. To this end, several variations have been proposed, where inductive conformal prediction \citep{DBLP:conf/ictai/PapadopoulosVG07} most prominently draw attention by assuming a separated calibration split available used for non-conformity measurement. Beyond this, several alternative approaches varying data assumption or algorithmic improvements have been proposed (e.g., as described in \citep{lei2018distribution,DBLP:conf/nips/KimXB20}). Recently, \citet{DBLP:conf/isipta/CellaM21,DBLP:journals/ijar/CellaM22} suggest a reinterpretation of conformal transducer as plausibility contours, allowing to derive validity guarantees on predictive distributions rather than prediction regions, which is being used throughout our work.

\section{Conformal Credal Pseudo-Labeling}
\label{sec:conf_pl}

In this section, we revisit credal pseudo-labeling and introduce conformal prediction as a framework for reliable prediction and an integral part of our conformal credal pseudo-labeling approach.

\subsection{Credal Pseudo-Labeling}
\label{sec:conf_pl:credal_labeling}

In supervised classification, one typically assumes instances $\vec{x} \in \mathcal{X}$ of an instance space $\mathcal{X}$ to be associated with a ground-truth in the form of a conditional probability distribution $p^*(\cdot \given \vec{x}) \in \mathbb{P}(\mathcal{Y})$ over the class space $\mathcal{Y}=\{y_1, ..., y_K\}$. Training a probabilistic classifier $\phat : \mathcal{X} \fromto \mathbb{P}(\mathcal{Y})$ commonly involves the optimization of a probabilistic surrogate loss function  $\mathcal{L} : \mathbb{P}(\mathcal{Y}) \times \mathbb{P}(\mathcal{Y}) \fromto \mathbb{R}_+$ (e.g., cross-entropy) comparing the model prediction $\phat$ to a proxy distribution $p$ reflecting the true target $p^*(\cdot \given \vec{x})$. Although it would be desirable, most classification datasets do not give direct access to $p^*$, but only to a realization $y\in \mathcal{Y}$ of the random variable $Y \sim p^*(\cdot \given \vec{x})$, whence a degenerate distribution $p_y$ with $p_y(y) = 1$ and $p_y(y')=0$ for $y' \neq y$ is often considered as a proxy. Other data modeling techniques such as label smoothing \citep{szegedy_ls} also consider ``softened'' versions of approaching $p_y$. \looseness=-1 

What methods of this kind share is their reliance on a single probability distribution as target information. As already said, this information does not allow for representing any uncertainty about the ground-truth distribution $p^*$: Predicting a precise distribution $\hat{p}$, the learner pretends a level of certainty that is not warranted. This is a sharp conflict with the observation that, in practice,  
such predictions tend to be poorly calibrated and overconfident \citep{mueller_ls}, and are hence likely to bias the learner. \looseness=-1

To overcome these shortcomings, \citet{lienen2021label} suggest to the replace a single probabilistic prediction by a \textit{credal set} $\mathcal{Q}\subseteq \mathbb{P}(\mathcal{Y})$, i.e., as set of (candidate) probability distributions. This set $\mathcal{Q}$ is supposed to cover the ground-truth $p^*$, very much like a confidence interval is supposed to cover a ground-truth parameter in classical statistics. Thus, the learner is able to represent its uncertainty about the ground-truth $p^*$ in a more faithful way.
More specifically, the learner's (epistemic) uncertainty is in direct correspondence with the size of the credal set. 
It can represent complete ignorance about $p^*$ (by setting $\mathcal{Q}=\mathbb{P}(\mathcal{Y})$), complete certainty ($\mathcal{Q}=\{p\}$), but also epistemic states in-between these extremes.

One way to specify credal sets is by means of so-called \textit{possibility distributions} \citep{Dubois2004PossibilityTP} $\pi : \mathcal{Y} \fromto [0,1]$, which 
induce a possibility measure $\Pi : 2^\mathcal{Y} \fromto [0,1]$ by virtue of $\Pi(Y) = \max_{y \in Y} \pi(y)$ for all $Y \subseteq \mathcal{Y}$. Possibility degrees can be interpreted as upper probabilities, so that 
a possibility distribution $\pi$ specifies the following set of probability distributions:
\begin{equation}
    \label{eq:credal_set}
    \begin{split} 
    \mathcal{Q}_\pi := & \Big\{ p \in \mathbb{P}(\mathcal{Y}) \given \forall Y \subseteq \mathcal{Y}: \\&  P(Y) = \sum_{y \in Y} p(y) \leq \max_{y \in Y} \pi(y) = \Pi(Y) \Big\} \enspace .
    \end{split}
\end{equation}
To guarantee $\mathcal{Q}_\pi \neq \emptyset$, the distribution $\pi$ is normalized such that $\max_{y \in \mathcal{Y}} \pi(y) = 1$, i.e., there is at least one label $y$ considered fully plausible. 

When considering the setting of semi-supervised learning, credal labeling is employed as a pseudo labeling technique within the framework of credal self-supervised learning (CSSL) \citep{cssl}: For each unlabeled instance, a credal set is maintained to express the current belief about $p^*$ in terms of a confidence region $\mathcal{Q} \subseteq \mathbb{P}(\mathcal{Y})$. To this end, CSSL derives a possibility distribution $\pi$ by an ad-hoc heuristic that assigns full plausibility $\pi(\hat{y}) = 1$ to the class $\hat{y}$ for which the predicted probability is highest, and determines a constant plausibility degree $\alpha$ for all other classes $y' \neq \hat{y}$; the latter depends on the learner's confidence as well as the class prior and the prediction history. 

Although CSSL has proven to provide competitive generalization performance, the credal set construction heuristic lacks a solid theoretical foundation and does not provide any quality guarantees.
Especially in the case of overconfident models, the credal sets may not reflect uncertainty properly and misguide the learner. Moreover, the class prior and the prediction history employed in the specification of possibility distribution constitute yet another potential source of bias in the learning process. This raises the question whether credal pseudo-labeling can be based on a more solid theoretical foundation and equipped with validity guarantees. An affirmative answer to this question is offered by the framework of conformal prediction.

\subsection{Inductive Conformal Prediction}

\textit{Conformal prediction} (CP) \citep{vovk05} is a distribution-free uncertainty quantification framework that judges a prediction for a query by its ``non-conformity'' with data observed before. While it has been originally introduced and extensively analyzed in an online setting, we refer to an offline (``batch-wise'') variant as typically considered in supervised learning settings. 
Several alternative variants of conformal prediction have been proposed in the past, most notably transductive and inductive methods. In its original formulation, the former requires a substantial amount of training and is clearly unsuitable in large-scale learning scenarios such as typical deep learning applications. This issue has been addressed in so-called \textit{inductive conformal prediction} (ICP) \citep{papadopoulos2008inductive} by alleviating the computational demands through additional calibration data.

Assume we are given training data $\mathcal{D}_{\text{train}} = \{ (\vec{x}_i, y_i) \}_{i=1}^N \subset (\mathcal{X} \times \mathcal{Y})^N$ and calibration data $\mathcal{D}_{\text{calib}} = \{ (\vec{x}_i, y_i) \}_{i=1}^L \subset (\mathcal{X} \times \mathcal{Y})^L$ comprising $N$ resp.\ $L$ i.i.d.\ observations. 
For a given query (resp. test) instance $\vec{x}_{N+1}$, the goal of conformal prediction is to provide an uncertainty quantification with provable guarantees about the likeliness of each possible candidate $\hat{y} \in \mathcal{Y}$ being the true outcome associated with $\vec{x}_{N+1}$. This is done by measuring the conformity of $(\vec{x}_{N+1}, \hat{y})$ with $\mathcal{D}_{\text{calib}}$ for each candidate label $\hat{y}$.

To this end, conformal prediction calculates a \textit{non-conformity measure} $\alpha : (\mathcal{X} \times \mathcal{Y})^{m} \times (\mathcal{X} \times \mathcal{Y}) \fromto \mathbb{R}$, which indicates how well a pair of a query instance and a candidate label conforms to an observed sequence of $m$ pairs. Typically, the non-conformity $\alpha(\mathcal{D}, (\vec{x}, y))$ involves the training of an underlying predictor (a model $\phat$ in our case) on $\mathcal{D}$, whose prediction $\phat(\vec{x})$ is then compared to $y$. In a probabilistic learning scenario, common choices are
\begin{equation}
    \label{eq:non_conformity}
    \alpha(\mathcal{D}, (\vec{x}, y)) = \max_{y_j \neq y} \hatp_\mathcal{D}(\vec{x})(y_j) - \hatp_\mathcal{D}(\vec{x})(y)
\end{equation}
and
\begin{equation}
    \label{eq:non_conformity2}
    \alpha(\mathcal{D}, (\vec{x}, y)) = \frac{\max_{y_j \neq y} \hatp_\mathcal{D}(\vec{x})(y_j)}{\hatp_\mathcal{D}(\vec{x})(y) + \gamma} \enspace ,
\end{equation}
where $\phat_\mathcal{D}$ is a probabilistic classifier trained on $\mathcal{D}$, $\phat(\cdot)(y')$ denotes the output probability for class $y'$ and $\gamma \geq 0$ is a sensitivity parameter \citep{DBLP:conf/ictai/PapadopoulosVG07}. In our case, we set $\mathcal{D}$ to the training data $\mathcal{D}_{\text{train}}$.

In ICP, non-conformity scores $\alpha_i$ are calculated for all $(\vec{x}_i, y_i) \in \mathcal{D}_{\text{calib}}$, i.e., it is determined for each calibration instance how well it conforms with the underlying training data $\mathcal{D}_{\text{train}}$. For the uncertainty quantification of the prediction for a query instance $\vec{x}_{N+1}$, the non-conformity $\alpha^{\hat{y}}_{N+1} := \alpha(\mathcal{D}_{\text{train}}, (\vec{x}_{N+1}, \hat{y}))$ is calculated for each candidate label $\hat{y} \in \mathcal{Y}$. Given the non-conformity values $\alpha_1, \ldots, \alpha_L$ associated with the calibration data, these non-conformity values can be used to construct \textit{p-values} for each candidate label $\hat{y}$, comparable to the notion of p-values in traditional statistics:
\begin{equation}
    \label{eq:pvalue}
    \pi_{N+1}(\hat{y}) = \frac{|\{ \alpha_i \geq \alpha^{\hat{y}}_{N+1} \given i \in \{1, ..., L\} \}| + 1}{L+1} \enspace .
\end{equation}

Consequently, one can define a conformal predictor $\Gamma_\delta$ with confidence $1-\delta$ by
\begin{equation}
    \Gamma_\delta(\mathcal{D}_{\text{calib}}, \vec{x}_{N+1}) = \Big\{ \hat{y} \in \mathcal{Y}: \pi_{N+1}(\hat{y}) \geq \delta \Big\}.
\end{equation}
Such a $\Gamma_\delta$ can be shown to cover the true label $y_{N+1}$ associated with $\vec{x}_{N+1}$ with high probability:
\begin{equation}
    \label{eq:weak_validity}
    \Pr \Big(y_{N+1} \in \Gamma_\delta(\mathcal{D}_{\text{calib}}, \vec{x}_{N+1}) \big) \geq 1 - \delta \enspace ,
\end{equation}
where the probability is taken over $\vec{x}_{N+1}$ and $\mathcal{D}_{\text{calib}}$. This property is typically referred to as (marginal) \textit{validity}, however, following \citep{DBLP:journals/ijar/CellaM22}, we refer to (\ref{eq:weak_validity}) as \textit{weak validity}. Note that this holds for any underlying probability distribution, any $\delta \in (0,1)$, and $N \in \mathbb{N}_+$. Nevertheless, practically speaking, small calibration datasets $\mathcal{D}_{\text{calib}}$ may be problematic if a certain granularity in the confidence degree is desired \citep{DBLP:conf/slds/JohanssonABCLS15}. 

\subsubsection{Conformalized Predictive Distributions}

While the notion of weak validity applies to set-valued prediction regions, its application as quantification of \textit{predictive distributions} is not obvious. In the realm of probabilistic classification (as considered here), one may wonder how CP can be applied to provide a meaningful uncertainty quantification. Recently, \citet{DBLP:conf/isipta/CellaM21} proposed an interpretation of p-values $\pi$ (cf. \cref{eq:pvalue}) in terms of possibility degrees, i.e., upper probabilities of the event that the respective candidate label is the true outcome. Consequently, these possibilities $\pi$ induce possibility measures $\Pi$, such that
\begin{equation}
    \Pi(A) := \sup_{\hat{y} \in A} \pi(\hat{y}), \qquad A \subseteq \mathcal{Y} \, .
\end{equation}
With $\sup_{\hat{y} \in \mathcal{Y}} \pi(\hat{y}) = 1$ for all $\mathcal{D}_{\text{train}}$ and assuming that $\pi$ is stochastically no smaller than the uniform distribution $\mathcal{U}(0,1)$ under any underlying (ground-truth) probability distribution, probabilistic predictors satisfy the \textit{strong validity} property defined as follows:
\begin{equation}
    \label{eq:strong_validity}
    \Pr \big(\Pi(A) \leq \delta, y_{N+1} \in A \big) \leq \delta 
\end{equation}
holds for any $\delta \in (0,1)$, $A\subseteq \mathcal{Y}$, training data $\mathcal{D}_{\text{train}}$ and any underlying true probability distribution.

\subsection{Conformal Credal Pseudo-Labeling}
\label{sec:conf_pl:conformal_credal_pl}

Strongly valid possibility distributions $\pi$ of this kind can be directly employed in the credal set formulation in (\ref{eq:credal_set}) to bound the space of probability distributions. Relating to the self-training paradigm we consider in a semi-supervised setting, where a model suggests itself credal sets as pseudo-supervision to learn from, we refer to such credal sets as \textit{conformal credal pseudo-labels}. Note that such pseudo-labels do not necessarily need to be constructed by an inductive conformal prediction procedure, but can used any CP methodology with the same guarantees.

To guarantee property (\ref{eq:strong_validity}) for the possibility distribution $\pi$  defined in (\ref{eq:pvalue}), this distribution needs to be normalized such that $\max_{\hat{y} \in \mathcal{Y}} \pi(\hat{y}) = 1$ to ensure strong validity and that $\mathcal{Q}_\pi \neq \emptyset$. 
A commonly applied normalization is suggested in \citep{DBLP:conf/isipta/CellaM21}:
\begin{equation}
    \label{eq:norm1}
    \pi(\hat{y}) = \frac{\pi(\hat{y})}{\max_{y' \in \mathcal{Y}} \pi(y')} \enspace .
\end{equation}

By the described conformal credal labeling method, we replace the ad-hoc construction of sets in CSSL by a more profound technique with formal guarantees. More precisely, we build the learner's uncertainty-awareness on top of an \textit{objective} quality criterion deduced from its accuracy on the calibration data. This not only provides strong validity guarantees, but also paves the path to a more rigorous theoretical analysis of self-supervised credal learning. Nevertheless, these guarantees come at a cost: By applying a conformal prediction procedure, one either provokes an increased computational overhead (for instance in a transductive CP setting) or a decrease in data efficiency by requiring additional calibration data.
However, as will be seen in the empirical evaluation, the improved pseudo-label quality makes up for this in terms of generalization performance.

\section{Conformal Credal Self-Supervised Learning}
\label{sec:ccssl}

In the following, we provide an overview of our methodology for effective semi-supervised learning from conformal credal pseudo-labels.

\subsection{Overview}

\begin{figure}[htbp]
    \centering
    \includegraphics[trim={0.5cm 1.5cm 1cm 0.9cm},clip,width=0.9\textwidth]{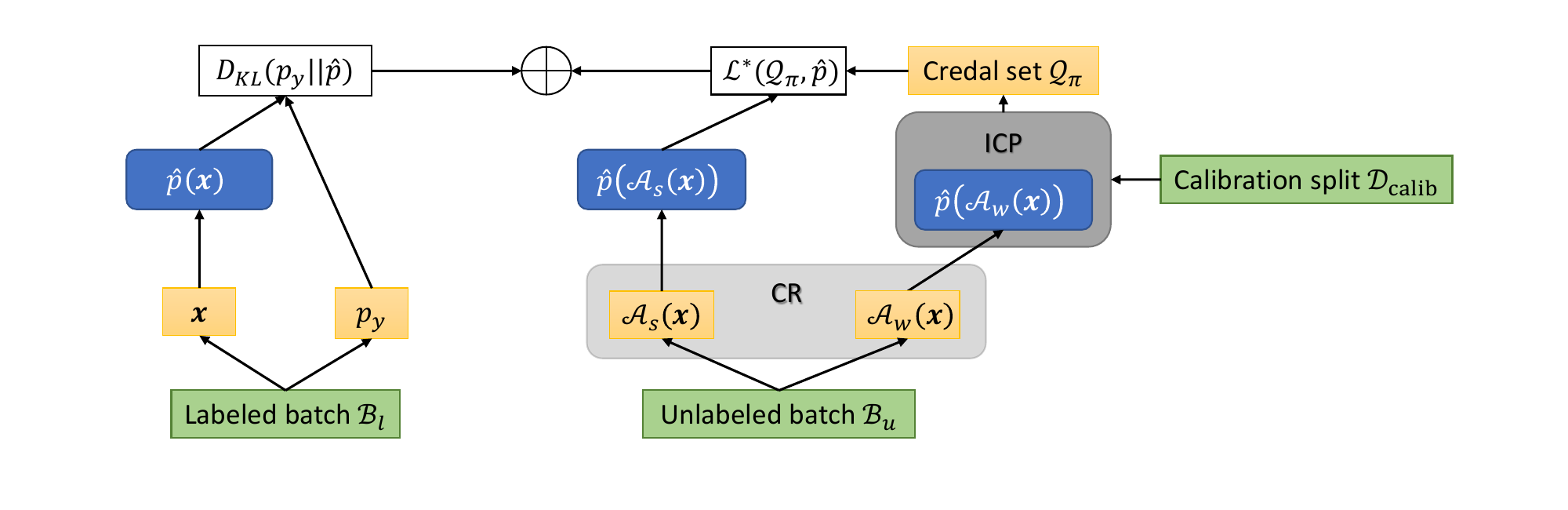}
    \caption{Overview over \ccssl: While learning on the labeled batch $\mathcal{B}_l$ is performed conventionally, the labels for the unlabeled batch $\mathcal{B}_u$ are constructed by an inductive conformal prediction (ICP) procedure provided calibration data $\mathcal{D}_{\text{calib}}$, employing consistency regularization (CR) for confirmation bias mitigation.}
    \label{fig:ccssl_overview}
\end{figure}

In our approach, which we call \textit{conformal credal self-supervised learning} (\ccssl), we combine credal self-supervised learning as proposed in \citep{cssl} with conformal credal labels. We thus replace the credal set construction in CSSL to ensure guarantees of the pseudo-label's validity. Although both CSSL and \ccssl{} are not specifically tailored to a particular domain, we adopt the framework of the former with a focus on image classification here.
Fig.\ \ref{fig:ccssl_overview} gives an overview of the algorithm, its pseudo-code can be found in the appendix.\footnote{The appendix, along with the official implementation, is externally available at \url{https://github.com/julilien/C2S2L}.}

In each iteration, we observe batches of labeled $\mathcal{B}_l = \{(\vec{x}_i, p_{y_i})\}_{i=1}^B$ and unlabeled instances $\mathcal{B}_u = \{\vec{x}_i\}_{i=1}^{\mu B}$, where $\mu \geq 1$ is the multiplicity of unlabeled over labeled instances in each batch. Here, we consider degenerate distributions $p_{y_i}$ as target information in $\mathcal{B}_l$. For the instances $(\vec{x}, p_y) \in \mathcal{B}_l$, we compute the Kullback-Leibler divergence $\kldiv{p_y}{\hat{p}(\vec{x})}$ as loss $\mathcal{L}_l$. 

To calculate the loss on unlabeled instances $\vec{x} \in \mathcal{B}_u$ leveraging our conformal credal labeling approach, we adopt the \textit{consistency regularization} (CR) framework \citep{consistency_reg,DBLP:conf/nips/SajjadiJT16}, which aims to mitigate the so-called \textit{confirmation bias} \citep{softlabelsconfbias} that describes the manifestation of misbeliefs of an accurate learner. CR enforces consistent predictions for different perturbed appearances of an instance, and has been successfully combined with pseudo-labeling, whereby \citet{fixmatch} propose a particularly simple scheme: An unlabeled instance $\vec{x} \in \mathcal{X}$ is augmented by a weak (e.g., horizontal flipping) and a strong (e.g., RandAugment \citep{randaugment} or CTAugment \citep{remixmatch}) augmentation policy $\mathcal{A}_w : \mathcal{X} \fromto \mathcal{X}$ and $\mathcal{A}_s : \mathcal{X} \fromto \mathcal{X}$, respectively. The weakly-augmented representation $\mathcal{A}_w(\vec{x})$ is then used to construct a pseudo-label based on $\hatp(\mathcal{A}_w(\vec{x}))$, which is finally compared to $\hatp(\mathcal{A}_s(\vec{x}))$ to compute the loss. 

Transferred to our methodology, we derive conformal predictions for the weakly-aug\-mented instance $\mathcal{A}_w(\vec{x})$ in an inductive manner based on a previously separated calibration dataset $\mathcal{D}_{\text{calib}}$. To this end, a (strongly valid) possibility distribution $\pi$ over all possible labels $y \in \mathcal{Y}$ is determined for the prediction $\hatp(\mathcal{A}_w(\vec{x}))$, which is used to construct a credal target set $\mathcal{Q}_\pi$ according to (\ref{eq:credal_set}). Finally, $\mathcal{Q}_\pi$ is compared to the prediction $\hat{p}(\mathcal{A}_s(\vec{x}))$ in terms of a generalized probabilistic loss $\mathcal{L}^* : 2^{\mathbb{P}(\mathcal{Y})} \times \mathbb{P}(\mathcal{Y}) \fromto \mathbb{R}_+$, which we shall detail in \Cref{sec:ccssl:gen_credal_labeling}.

Altogether, the training objective is given by
\begin{equation}
\begin{split}
   \mathcal{L}  &= \overbrace{\frac{1}{|\mathcal{B}_l|} \sum_{(\vec{x}_i, p_{y_i}) \in \mathcal{B}_l} \kldiv{p_{y_i}}{\phat(\vec{x}_i)}}^{\mathcal{L}_l} \, \\
   & \qquad + \, \lambda_u \underbrace{\frac{1}{|\mathcal{B}_u|} \sum_{\vec{x}_i \in \mathcal{B}_u} \mathcal{L}^*(Q_{\pi_i}, \phat((\mathcal{A}_s(\vec{x})_i))}_{\mathcal{L}_u} \enspace ,
  \end{split}
\end{equation}
where $\lambda_u \geq 0$ weights the importance of the unlabeled loss part $\mathcal{L}_u$.

\subsection{Validity Mitigates Confirmation Biases}
\label{sec:ccssl:mit_cb}

Many of the recent SSL approaches, including CSSL, leverage techniques such as consistency regularization to mitigate confirmation biases. Intuitively speaking, it enforces a smooth decision boundary in the neighborhood of (augmented) unlabeled instances, also referred to as \textit{local consistency} \citep{DBLP:conf/iclr/WeiSCM21}. For instance, classifiers trained on images showing cars provided in the training dataset predict consistent labels for similar cars that differ in the color, perspective changes or in an alternative scenery. If a certain proximity of the pseudo-labeled instances to the known labeled instances in the latent feature space is preserved, known as the \textit{expansion assumption}, CR can lead to a global class-wise consistency, which can ``de-noise'' wrong pseudo-labels. Hence, it serves as a means to alleviate the problem of misguidance through mislabels. 

However, the expansion assumption often appears too unrealistic in practice. For instance, violations happen in cases where the neighborhood of an (unlabeled) instance is not homogeneously populated by instances of the same true class. Typical examples for such situations are highly uncertain classification problems in which individual classes are hard to separate, e.g., distinguishing model variants of a car based on subtle visual differences such as badges, or data with imbalanced class frequencies. In the latter case, some classes dominate underrepresented ones, such that the neighborhood of unlabeled instances may be mostly populated by instances from different classes, thereby violating the expansion assumption. Consequently, from an empirical risk minimization point of view, it might be more reasonable to attribute larger regions populated by unlabeled instances from the minority class to a majority classes. This leaves the former overlooked, so that the correction of wrong pseudo-labels is not possible anymore. 

Credal labels as constructed in CSSL are incapable of solving the issue of the expansion violation assumption and hence ineffective CR: Similar to single-point probabilistic pseudo-labeling methods, the pseudo-labels have no error guarantees that the actual true class is adequately incorporated in the target. As opposed to this, conformal credal pseudo-labels can provide such, which is why their validity guarantee can be regarded as a second fallback when combined with CR. When the expansion assumption is violated, a higher validity of credal sets oppose committing to a misbelief in a too premature manner by a more cautious learning behavior. As will be seen in the experimental evaluation on commonly used semi-supervised image classification benchmarks, conformal (and hence valid) credal sets indeed show a more robust behavior towards confirmation biases when facing highly uncertain neighborhoods (cf. \cref{sec:exps:gen}). In addition, aiming to isolate the contributions of CR and validity in a different setting, we provide further experiments on imbalanced data in the appendix.

\subsection{Generalized Credal Learning}
\label{sec:ccssl:gen_credal_labeling}

As set-valued targets are provided as (pseudo-)supervision for the unlabeled instances, a generalization of a single-point probability loss $\mathcal{L}$ is required. Here, we follow \citep{cssl} and leverage a generalized empirical risk minimization approach by minimizing the \textit{optimistic superset loss} \citep{huellermeierchengosl,inf_loss_icml} 
\begin{equation}
    \label{eq:osl}
    \mathcal{L}^*(\mathcal{Q}_\pi, \hat{p}) := \min_{p \in \mathcal{Q}_\pi} \mathcal{L}(p, \hat{p}) \enspace ,
\end{equation}
with $\mathcal{L} = D_{KL}$ in our case. This generalization is motivated by the idea of \textit{data disambiguation} \citep{huellermeierchengosl}, in which targets are disambiguated within imprecise sets according to their plausibility in the overall loss minimization context over all data points, leading to a minimal empirical risk with respect to the original loss $\mathcal{L}$.

While credal sets as considered in CSSL are of rather simplistic nature, credal sets with arbitrary (but normalized) possibility distributions as introduced in \Cref{sec:conf_pl:conformal_credal_pl} impose a more complex optimization problem. More precisely, consider a possibility distribution $\pi$ and denote $\pi_i := \pi(y_i)$. Without loss of generality, let the possibilities be ordered, i.e., $0 \leq \pi_1 \leq \ldots, \leq \pi_K = 1$. Then, it has been shown that the following holds for any distribution $p \in \mathcal{Q}_\pi$ with $p_i := p(y_i)$  \citep{DELGADO1987311}:
\begin{equation}
    \label{eq:possibility_constraints}
    p \in \mathcal{Q}_\pi \Leftrightarrow \sum_{k=1}^i p_k \leq \pi_i \enspace , \qquad i \in \{1, ..., K\}
\end{equation}
The resulting set of inequality constraints induces a convex polytope \citep{kroupa2006many}, such that the optimization problem in (\ref{eq:osl}) becomes the problem of finding the closest point in a convex polytope (here $\mathcal{Q}_\pi$) with minimal distance to a given query point ($\hat{p}$). We provide examples illustrating such convex credal polytopes in the appendix.

Optimization problems of this kind are not new, and many approximate solutions have been proposed in the past, including projected gradient algorithms \citep{DBLP:journals/corr/abs-1107-4623}, conditional gradient descent (alias Frank-Wolfe algorithms) \citep{Frank1956AnAF,DBLP:conf/icml/Jaggi13}, and other projection-free stochastic methods \citep{DBLP:journals/access/LiCC20b}. However, we found that one can take advantage of the structure of the problem (\ref{eq:possibility_constraints}) to induce a precise and average-case efficient algorithmic solution to (\ref{eq:osl}). 

\begin{algorithm}[htbp]
    \caption{Generalized Credal Learning Loss}
    \label{alg:gen_credal_learning}
    \begin{algorithmic}
        \REQUIRE Predicted distribution $\hatp \in \mathbb{P}(\cY)$, (normalized) possibility distribution $\pi : \cY \fromto [0,1]$
        
        \STATE \algorithmicif\ $\hatp \in \mathcal{Q}_\pi$ \algorithmicthen\ \algorithmicreturn\ $\kldiv{\hatp}{\hatp} = 0$
        \STATE Initialize set of unassigned classes $Y = \cY$
        \WHILE{$Y$ is not empty}
            \STATE Determine $y^* \in Y$ with highest $\pi(y^*)$, such that the probabilities
            \begin{equation*}
                \bar{p}(y) = \left(\pi(y^*) - \sum_{y' \notin Y} p^r(y')\right) \cdot \frac{\hatp(y)}{\sum_{y' \in Y'} \hatp(y')}
            \end{equation*}
            \qquad for all $y \in Y' := \{ y \in Y \given \pi(y) \leq \pi(y^*) \}$ do not violate the constraints in \cref{eq:possibility_constraints}
            \STATE Assign $p^r(y) = \bar{p}(y)$ for all $y \in Y'$
            \STATE $Y = Y \setminus Y'$
        \ENDWHILE
        \RETURN $\kldiv{p^r}{\hatp}$
    \end{algorithmic}
\end{algorithm}

\Cref{alg:gen_credal_learning} lists the procedure to determine the loss $\mathcal{L}^*(\mathcal{Q}_\pi, \hatp)$. The key idea is to consider each face of the convex polytope being associated with a particular possibility constraint $\pi_i$. The algorithm then determines the optimal face on which the prediction $\hatp$ can be projected without violating the constraint for any $\pi_j \leq \pi_i$, which is guaranteed to provide the optimal solution for the classes $y_j$. In an iterative scheme, this procedure is applied to all classes, leading to the distribution $p \in \mathcal{Q}_\pi$ with minimal $D_{KL}$ to $\phat$. Hence, we can state the following theorem.

\begin{theorem}[Optimality]\label{theorem:optimality}
Given a credal set $\mathcal{Q}_\pi$ induced by a normalized possibility distribution $\pi : \mathcal{Y} \fromto [0,1]$ with $\max_{y \in \mathcal{Y}} \pi(y) = 1$ according to (\ref{eq:credal_set}), \Cref{alg:gen_credal_learning} returns the solution of $\mathcal{L}^*(\mathcal{Q}_\pi, \hat{p})$ as defined in (\ref{eq:osl}) for an arbitrary distribution $\hat{p} \in \mathbb{P}(\mathcal{Y})$.
\end{theorem}

Due to space limitations, we provide the proof of \cref{theorem:optimality} in the appendix. We further provide a discussion of the computational complexity of \cref{alg:gen_credal_learning} therein, showing that the worst case complexity is cubic in the number of labels\,---\,the average case complexity is much lower, however, making the method amenable to large data sets as demonstrated in our experimental evaluation.

\section{Experiments}
\label{sec:exps}

To demonstrate the effectiveness of our method, we present empirical results for the domain of image classification as an important and practically relevant application. We refer to the appendix for additional results, including a study on knowledge graphs to demonstrate generalizability of our approach, as well as a more comprehensive overview over experimental settings for reproducibility.

\subsection{Experimental Setting}
\label{sec:exps:setting}

Following previous semi-supervised learning evaluation protocols \citep{fixmatch,cssl}, we performed experiments on CIFAR-10/-100 \citep{CIFAR}, SVHN \citep{svhn} (without the extra data split) and STL-10 \citep{stl10} with various numbers of sub-selected labels. To this end, we trained Wide ResNet-28-2 \citep{wideresnet} models for CIFAR-10, SVHN and STL-10, whereas we considered Wide ResNet-28-8 as architecture for CIFAR-100. For each combination, we conducted a Bayesian optimization to tune the hyperparameters with $20$ runs each on a separate validation split, while we report the final test performances per model trained with the tuned parameters. Each model was trained for $2^{18}$ iterations. We repeated each experiment for $3$ different seeds, whereby we re-used the best hyperparameters for all seeds due to the high computational complexity.

For \ccssl, we distinguish two variants employing either the non-conformity measure (\ref{eq:non_conformity}) or (\ref{eq:non_conformity2}), which we refer to as \textit{\ccssl-diff} and \textit{\ccssl-prop} respectively.
As baselines, we consider FixMatch \citep{fixmatch} and its distribution alignment version as hard and soft probabilistic pseudo-labeling technique, respectively. Moreover, we compare our method to UDA \citep{UDA} as another soft variant. Recently, FlexMatch \citep{DBLP:conf/nips/ZhangWHWWOS21} has been proposed as an advancement of FixMatch by adding curriculum learning encompassing uncertainty-awareness. Finally, we report results of CSSL \citep{cssl} as methodically closest related work to our approach. In all cases, we employ RandAugment as strong augmentation policy to realize consistency regularization. As all of the mentioned approaches were embedded in the basic FixMatch framework, we achieve a fair comparison alleviating side-effects.

\subsection{Generalization Performance}
\label{sec:exps:gen}

\begin{table}[htbp]
    \centering
    \caption{Averaged accuracies over $3$ seeds for different numbers of labels. \textbf{Bold} entries indicate the best performing method per column.}
    \label{tab:misclassification}
    \resizebox{\textwidth}{!}{%
    \begin{tabular}{lcccccccccc}
    \toprule
    & \multicolumn{3}{c}{CIFAR-10} & \multicolumn{3}{c}{CIFAR-100} & \multicolumn{3}{c}{SVHN} & STL-10 \\
    \cmidrule(lr){2-4}
    \cmidrule(lr){5-7}
    \cmidrule(lr){8-10}
    \cmidrule(lr){11-11}
     & 40 lab. & 250 lab. & 4000 lab. & 400 lab. & 2500 lab. & 10000 lab. & 40 lab. & 250 lab. & 1000 lab. & 1000 lab. \\
    \midrule
    UDA & 86.44 {\small $\pm${2.70}} & \textbf{94.81} {\small $\pm${0.22}} & 95.31 {\small $\pm${0.10}} & 49.41 {\small $\pm${2.96}} & 68.33 {\small $\pm${0.31}} & 75.98 {\small $\pm${0.45}} & 84.82 {\small $\pm${10.6}} & 96.41 {\small $\pm${1.04}} & 97.14 {\small $\pm${0.24}} & 83.94 {\small $\pm${1.49}} \\
    FixMatch & 87.14 {\small $\pm${2.61}} & 93.81 {\small $\pm${1.02}} & 95.08 {\small $\pm${0.18}} & 47.73 {\small $\pm${1.88}} & 66.82 {\small $\pm${0.26}} & 76.66 {\small $\pm${0.18}} & 86.26 {\small $\pm${14.2}} & 96.23 {\small $\pm${1.50}} & \textbf{97.32} {\small $\pm${0.09}} & 85.34 {\small $\pm${0.92}} \\
    FixMatch DA & 89.54 {\small $\pm${5.90}} & 94.00 {\small $\pm${0.56}} & 95.13 {\small $\pm${0.21}} & 51.31 {\small $\pm${2.67}} & 69.98 {\small $\pm${0.30}} & 76.67 {\small $\pm${0.08}} & 86.00 {\small $\pm${16.3}} & 95.85 {\small $\pm${1.62}} & 97.02 {\small $\pm${0.16}} & 85.59 {\small $\pm${1.21}} \\
    FlexMatch & 91.21 {\small $\pm${3.46}} & 94.08 {\small $\pm${0.64}} & 94.62 {\small $\pm${0.27}} & 49.99 {\small $\pm${0.39}} & \textbf{71.47} {\small $\pm${1.06}} & 77.01 {\small $\pm${0.17}} & 85.78 {\small $\pm${1.37}} & 96.52 {\small $\pm${0.29}} & 96.54 {\small $\pm${0.30}} & 85.24 {\small $\pm${1.49}} \\
    \midrule
    CSSL & \textbf{91.70} {\small $\pm${4.77}} & 94.59 {\small $\pm${0.15}} & 95.41 {\small $\pm${0.04}} & 52.54 {\small $\pm${1.60}} & 67.81 {\small $\pm${0.64}} & 77.56 {\small $\pm${0.22}} & \textbf{87.27} {\small $\pm${5.69}} & 95.54 {\small $\pm${1.63}} & 96.69 {\small $\pm${0.75}} & 85.07 {\small $\pm${1.11}} \\
    \midrule
    \ccssl{}-diff & 90.13 {\small $\pm${3.33}} & 93.56 {\small $\pm${0.21}} & 95.43 {\small $\pm${0.06}} & \textbf{54.13} {\small $\pm${1.97}} & 69.33 {\small $\pm${0.78}} & 77.40 {\small $\pm${0.18}} & 86.52 {\small $\pm${6.84}} & 95.67 {\small $\pm${1.51}} & 96.97 {\small $\pm${0.23}} & 85.26 {\small $\pm${0.79}}\\
    \ccssl{}-prop & 89.24 {\small $\pm${2.56}} & 94.34 {\small $\pm${0.27}} & \textbf{95.48} {\small $\pm${0.06}} & 53.48 {\small $\pm${2.75}} & 67.90 {\small $\pm${0.37}} & \textbf{77.72} {\small $\pm${0.08}} & 85.38 {\small $\pm${7.67}} & \textbf{96.87} {\small $\pm${0.20}} & 97.04 {\small $\pm${0.38}} & \textbf{85.67} {\small $\pm${1.14}} \\
    \bottomrule
    \end{tabular}
    }
\end{table}

\Cref{tab:misclassification} shows the generalization performance of all methods with respect to the accuracy for various amounts of labeled data. Note that the calibration set for the conformal credal variant is taken from the labeled instances, i.e., the effective number of instances to learn from is further decreased. In the appendix, we provide a quantification of the network calibration, i.e., the quality of the predicted probability distributions.

As can be seen, \ccssl{} leads to competitive generalization when a sufficient amount of labeled instances is provided, often even performing best among the compared methods. In the label-scarce settings, the separation of a calibration set (which is taken from the labeled data part) has a stronger effect, and the performance is slightly inferior to CSSL. However, the performance still does not drop, which confirms the effectiveness of the improved pseudo-label quality over CSSL. In most other cases, \ccssl{} appears to be superior compared to CSSL.

On CIFAR-100 with 400 labels, the learner often faces neighborhoods populated by instances with heterogeneous class distributions. As a result, consistency regularization becomes less effective as the rich class space harms the continuity of regions covering instances of a particular class, leading to the manifestation of misbeliefs in these regions. 
As consistently observed in the learning curves for CIFAR-100 shown in Fig. \ref{fig:rebut:cifar100}, CSSL with CR suffers from this issue with increased confidence.
In the terminal phase of the training, credal sets are kept rather vague, which is why misbeliefs do not affect the overall performance too much. However, later phases show a performance degradation, which can be attributed to the fact that (potentially mislabeled) credal pseudo-labels become smaller and have thus a higher weight in the overall loss minimization. 
As opposed to that, \ccssl{} learns more cautiously thanks to the conformal construction of credal sets and can continuously improve the generalization performance.

\begin{figure}[htbp]
    \centering
    \includegraphics[trim={0cm 0.35cm 0cm 0.25cm},clip,width=0.6\columnwidth]{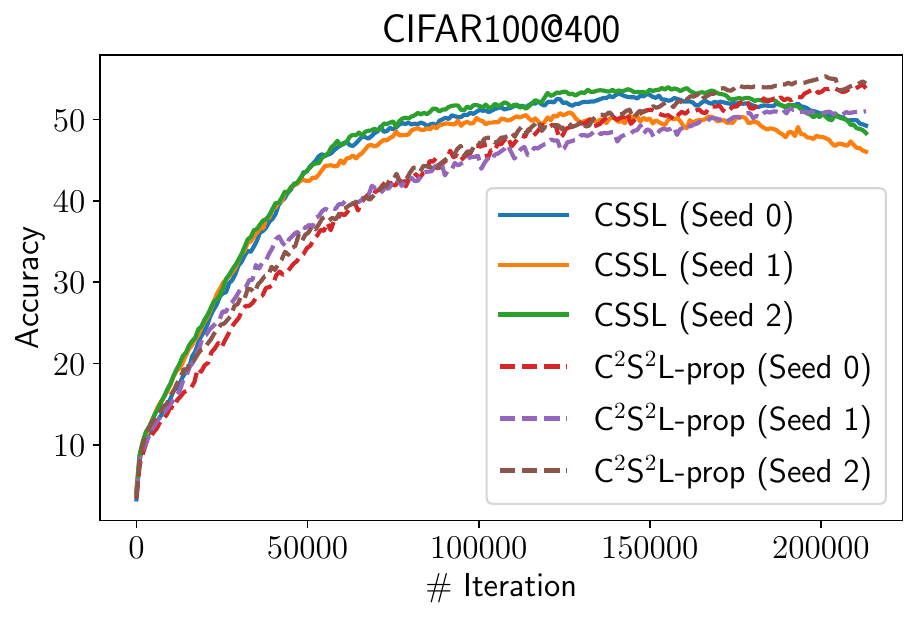}
    \caption{Test accuracies over the course of the training on CIFAR-100 with 400 labels.}
    \label{fig:rebut:cifar100}
\end{figure}

\subsection{Pseudo-Label Quality}
\label{sec:exps:pl_quality}

To assess the quality of the (credal) pseudo-labels, we report the validity according to (\ref{eq:strong_validity}) by measuring the error rate in terms of $\mathbbm{1}(\pi(y) \leq \delta)$
given the pseudo-supervisions $\pi$ for the unlabeled training instances with true labels $y$ (for significance levels $\delta \in \{0.05, 0.1, 0.25\}$). To ensure fairness, we compute these scores for trained models so as not to give an advantage to \ccssl, which can in contrast to CSSL rely on the strong validity guarantee throughout the training. 

\begin{table}[htbp]
    \centering
    \caption{The final validity as specified in \cref{eq:strong_validity} of all credal pseudo-labels for different significance levels $\delta$ averaged over $3$ random seeds. \textbf{Bold} entries indicate the best method per column, the standard deviation is a factor of $1e^{-3}$.}
    \label{tab:validity}
    \resizebox{\textwidth}{!}{%
    \begin{tabular}{lcccccccccccc}
    \toprule
    & \multicolumn{6}{c}{CIFAR-10} & \multicolumn{6}{c}{SVHN} \\
    \cmidrule(lr){2-7}
    \cmidrule(lr){8-13}
     & \multicolumn{3}{c}{250 lab.} & \multicolumn{3}{c}{1000 lab.} & \multicolumn{3}{c}{250 lab.} & \multicolumn{3}{c}{1000 lab.} \\
    \cmidrule(lr){2-4}
    \cmidrule(lr){5-7}
    \cmidrule(lr){8-10}
    \cmidrule(lr){11-13}
    $\delta$ & $0.05$ & $0.1$ & $0.25$ & $0.05$ & $0.1$ & $0.25$ & $0.05$ & $0.1$ & $0.25$ & $0.05$ & $0.1$ & $0.25$ \\
    \midrule
    CSSL & 0.027 {\small $\pm${0.6}} & 0.033 {\small $\pm${1.1}} & \textbf{0.042} {\small $\pm${0.7}} & 0.033 {\small $\pm${1.1}} & 0.037 {\small $\pm${4.8}} & 0.042 {\small $\pm${1.1}} & 0.038 {\small $\pm${0.6}} & 0.044 {\small $\pm${1.4}} & 0.053 {\small $\pm${0.8}} & 0.030  {\small $\pm${1.6}} & 0.034 {\small $\pm${6.1}} & 0.040 {\small $\pm${2.1}} \\
    \midrule
    \ccssl{}-diff & 0.026 {\small $\pm${2.9}} & 0.032 {\small $\pm${1.9}} & 0.047 {\small $\pm${4.7}} & 0.029 {\small $\pm${1.5}} & 0.034 {\small $\pm${1.5}} & 0.042 {\small $\pm${0.6}} & 0.028 {\small $\pm${1.0}} & 0.035 {\small $\pm${1.1}} & 0.042 {\small $\pm${1.3}} & 0.024 {\small $\pm${1.2}} & 0.028 {\small $\pm${1.6}} & 0.032 {\small $\pm${1.2}}\\
    \ccssl{}-prop & \textbf{0.022} {\small $\pm${2.1}} & \textbf{0.031} {\small $\pm${1.5}} & 0.043 {\small $\pm${1.7}} & \textbf{0.024} {\small $\pm${2.7}} & \textbf{0.029} {\small $\pm${3.4}} & \textbf{0.039} {\small $\pm${1.9}} & \textbf{0.014} {\small $\pm${1.1}} & \textbf{0.020} {\small $\pm${1.1}} & \textbf{0.024} {\small $\pm${0.3}} & \textbf{0.021} {\small $\pm${1.1}} & \textbf{0.023} {\small $\pm${2.7}} & \textbf{0.029} {\small $\pm${1.5}} \\
    \bottomrule
    \end{tabular}}
\end{table}

In the context of self-training, a lower error rate is desirable to obtain less noisy and more informative self-supervision. As shown in \Cref{tab:validity}, the two variants of \ccssl{} indeed achieve a consistent improvement in the validity of the pseudo-labels over CSSL in terms of the error rate, although the supervision provided by the latter is already quite accurate and does not violate the strong validity property for these instances either. Moreover, \ccssl-prop leads to more accurate sets compared to \ccssl-diff. 
Note that in standard conformal prediction one is typically interested in error rates that match the respective significance levels. In the setting considered here, this is undermined by relatively small calibration sets $\mathcal{D}_{\text{calib}}$, as well as the fact that the unlabeled instances, whose validity is presented here, have already been observed in previous training iterations, thus leading to optimistic error rates. Nevertheless, this optimism turns out to be beneficial for effective self-training, as our empirical results confirm. 

\begin{figure}[htbp]
    \centering
    \includegraphics[trim={0cm 0.35cm 0cm 0.25cm},clip,width=1.0\textwidth]{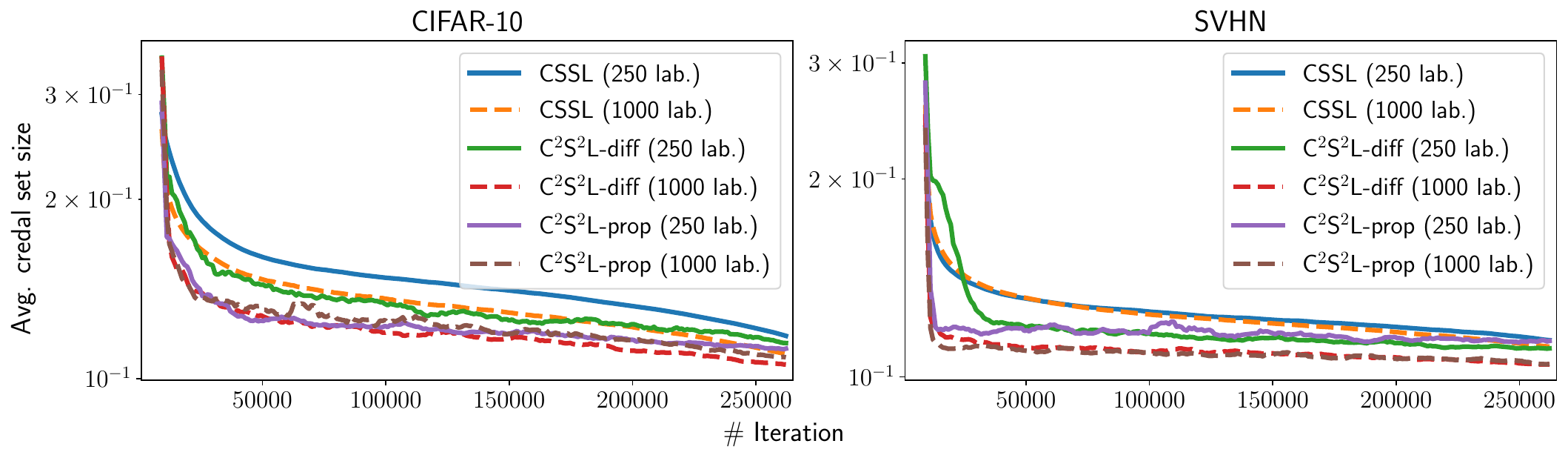}
    \caption{The credal set efficiency in terms of the mean possibilities $\pi$ for each class, averaged over all pseudo-labels in each iteration for all $3$ seeds.}
    \label{fig:efficiency}
\end{figure}

Furthermore, the efficiency of the credal sets, i.e., their sizes, has an effect on the learning behavior. \cref{fig:efficiency} compares the credal set sizes of CSSL and \ccssl{} over the course of the training. As can be seen, \ccssl{} constructs smaller credal sets, which in combination with the improved validity leads to a more effective supervision. Moreover, the credal sets sizes decrease with higher numbers of labels, which is due to a higher prediction quality involved in the credal set construction. Apart from the results on CIFAR-10 with 250 labels, one can further see that the credal set size deviates only slightly between the two non-conformity measures employed in \ccssl.

\section{Conclusion}
\label{sec:conclusion}

In the context of semi-supervised learning, previous pseudo-labeling approaches lack validity guarantees for the quality of the pseudo-supervision. Such methods often suffer from misleading supervision, leaving much potential unused. In our work, we address these shortcomings by a conformal credal labeling approach leveraging the framework of conformal prediction, which entails validity guarantees for the constructed credal pseudo-labels. Our empirical study confirms the adequacy of this approach when combined with consistency regularization in terms of generalization performance, as well as the calibration and efficiency of pseudo-labels compared to previous methods. At the same time, the combination of a rigorously studied uncertainty quantification framework with pseudo-labeling paves the way for more thorough theoretical analyses in the field of self-training in semi-supervised learning.

In future work, we plan to investigate approaches to achieve \textit{conditional} (per-instance) validity, i.e., conformal credal labels specifically tailored to (conditioned on) the query instance $\vec{x}_{N+1}$, which would lead to even stronger guarantees for the quality of pseudo-labels. Approximations of this type of validity for conformal prediction have already been provided \citep{DBLP:journals/ml/Vovk13,DBLP:conf/copa/Bellotti21}. Moreover, a more rigorous analysis of the efficiency of alternative non-conformity scores needs to be performed, including their robustness to label noise as typically present in real-world data.

\acks{This work was partially supported by the German Research Foundation (DFG) within the Collaborative Research Center ``On-The-Fly Computing'' (CRC 901 project no.~160364472). Moreover, the authors gratefully acknowledge the funding of this project by computing time provided by the Paderborn Center for Parallel Computing (PC$^2$).}

\bibliography{egbib}

\begin{thebibliography}{76}
\providecommand{\natexlab}[1]{#1}
\providecommand{\url}[1]{\texttt{#1}}
\expandafter\ifx\csname urlstyle\endcsname\relax
  \providecommand{\doi}[1]{doi: #1}\else
  \providecommand{\doi}{doi: \begingroup \urlstyle{rm}\Url}\fi

\bibitem[Arazo et~al.(2020)Arazo, Ortego, Albert, O'Connor, and
  McGuinness]{softlabelsconfbias}
Eric Arazo, Diego Ortego, Paul Albert, Noel~E. O'Connor, and Kevin McGuinness.
\newblock Pseudo-labeling and confirmation bias in deep semi-supervised
  learning.
\newblock In \emph{Proc. of the International Joint Conference on Neural
  Networks, {IJCNN}, Glasgow, United Kingdom, July 19-24}, pages 1--8. {IEEE},
  2020.

\bibitem[Bachman et~al.(2014)Bachman, Alsharif, and Precup]{consistency_reg}
Philip Bachman, Ouais Alsharif, and Doina Precup.
\newblock Learning with pseudo-ensembles.
\newblock In \emph{Advances in Neural Information Processing Systems 27: Annual
  Conference on Neural Information Processing Systems, {NIPS}, Montreal,
  Quebec, Canada, December 8-13}, pages 3365--3373, 2014.

\bibitem[Bahmani and Raj(2011)]{DBLP:journals/corr/abs-1107-4623}
Sohail Bahmani and Bhiksha Raj.
\newblock A unifying analysis of projected gradient descent for
  $\ell_{p}$-constrained least squares.
\newblock \emph{CoRR}, abs/1107.4623, 2011.

\bibitem[Balasubramanian et~al.(2014)Balasubramanian, Ho, and Vovk]{conf_pred}
Vineeth Balasubramanian, Shen-Shyang Ho, and Vladimir Vovk.
\newblock \emph{Conformal Prediction for Reliable Machine Learning: Theory,
  Adaptations and Applications}.
\newblock Morgan Kaufmann Publishers Inc., San Francisco, CA, USA, 1st edition,
  2014.
\newblock ISBN 0123985374.

\bibitem[Bellotti(2021)]{DBLP:conf/copa/Bellotti21}
Anthony Bellotti.
\newblock Approximation to object conditional validity with inductive conformal
  predictors.
\newblock In Lars Carlsson, Zhiyuan Luo, Giovanni Cherubin, and Khuong~An
  Nguyen, editors, \emph{Conformal and Probabilistic Prediction and
  Applications, virtual, September 8-10}, volume 152 of \emph{Proceedings of
  Machine Learning Research}, pages 4--23. {PMLR}, 2021.

\bibitem[Berthelot et~al.(2019)Berthelot, Carlini, Goodfellow, Papernot,
  Oliver, and Raffel]{mixmatch}
David Berthelot, Nicholas Carlini, Ian~J. Goodfellow, Nicolas Papernot, Avital
  Oliver, and Colin Raffel.
\newblock {MixMatch}: {A} holistic approach to semi-supervised learning.
\newblock In \emph{Advances in Neural Information Processing Systems 32: Annual
  Conference on Neural Information Processing Systems, {NeurIPS}, Vancouver,
  BC, Canada, December 8-14}, pages 5050--5060, 2019.

\bibitem[Berthelot et~al.(2020)Berthelot, Carlini, Cubuk, Kurakin, Sohn, Zhang,
  and Raffel]{remixmatch}
David Berthelot, Nicholas Carlini, Ekin~D. Cubuk, Alex Kurakin, Kihyuk Sohn,
  Han Zhang, and Colin Raffel.
\newblock {ReMixMatch}: Semi-supervised learning with distribution matching and
  augmentation anchoring.
\newblock In \emph{Proc. of the 8th International Conference on Learning
  Representations, {ICLR}, Addis Ababa, Ethiopia, April 26-30}. OpenReview.net,
  2020.

\bibitem[Biewald(2020)]{wandb}
Lukas Biewald.
\newblock Experiment tracking with weights and biases, 2020.
\newblock URL \url{https://www.wandb.com/}.
\newblock Software available from wandb.com.

\bibitem[Cabannes et~al.(2020)Cabannes, Rudi, and Bach]{inf_loss_icml}
Vivien Cabannes, Alessandro Rudi, and Francis~R. Bach.
\newblock Structured prediction with partial labelling through the infimum
  loss.
\newblock In \emph{Proceedings of the 37th International Conference on Machine
  Learning, {ICML}, virtual, July 13-18}, volume 119, pages 1230--1239. {PMLR},
  2020.

\bibitem[Cao et~al.(2021)Cao, Xu, Yang, Cao, and Huang]{zhang2019quaternion}
Zongsheng Cao, Qianqian Xu, Zhiyong Yang, Xiaochun Cao, and Qingming Huang.
\newblock Dual quaternion knowledge graph embeddings.
\newblock In \emph{Thirty-Fifth {AAAI} Conference on Artificial Intelligence,
  {AAAI}, Virtual Event, February 2-9}, pages 6894--6902. {AAAI} Press, 2021.

\bibitem[Caron et~al.(2021)Caron, Touvron, Misra, J{\'{e}}gou, Mairal,
  Bojanowski, and Joulin]{DBLP:conf/iccv/CaronTMJMBJ21}
Mathilde Caron, Hugo Touvron, Ishan Misra, Herv{\'{e}} J{\'{e}}gou, Julien
  Mairal, Piotr Bojanowski, and Armand Joulin.
\newblock Emerging properties in self-supervised vision transformers.
\newblock In \emph{2021 {IEEE/CVF} International Conference on Computer Vision,
  {ICCV} 2021, Montreal, QC, Canada, October 10-17, 2021}, pages 9630--9640.
  {IEEE}, 2021.

\bibitem[Cella and Martin(2021)]{DBLP:conf/isipta/CellaM21}
Leonardo Cella and Ryan Martin.
\newblock Valid inferential models for prediction in supervised learning
  problems.
\newblock In \emph{International Symposium on Imprecise Probability: Theories
  and Applications, {ISIPTA}, Granada, Spain, July 6-9}, volume 147 of
  \emph{Proceedings of Machine Learning Research}, pages 72--82. {PMLR}, 2021.

\bibitem[Cella and Martin(2022)]{DBLP:journals/ijar/CellaM22}
Leonardo Cella and Ryan Martin.
\newblock Validity, consonant plausibility measures, and conformal prediction.
\newblock \emph{Int. J. Approx. Reason.}, 141:\penalty0 110--130, 2022.

\bibitem[Chapelle et~al.(2006)Chapelle, Sch{\"{o}}lkopf, and
  Zien]{chapelle2009semi}
Olivier Chapelle, Bernhard Sch{\"{o}}lkopf, and Alexander Zien.
\newblock \emph{Semi-Supervised Learning}.
\newblock The {MIT} Press, 2006.

\bibitem[Coates et~al.(2011)Coates, Ng, and Lee]{stl10}
Adam Coates, Andrew~Y. Ng, and Honglak Lee.
\newblock An analysis of single-layer networks in unsupervised feature
  learning.
\newblock In \emph{Proc. of the 14th International Conference on Artificial
  Intelligence and Statistics, {AISTATS}, Fort Lauderdale, {FL}, USA, April
  11-13}, volume~15 of \emph{{JMLR} Proceedings}, pages 215--223. JMLR.org,
  2011.

\bibitem[Cohen et~al.(2017)Cohen, Afshar, Tapson, and van
  Schaik]{DBLP:journals/corr/CohenATS17}
Gregory Cohen, Saeed Afshar, Jonathan Tapson, and Andr{\'{e}} van Schaik.
\newblock {EMNIST:} an extension of {MNIST} to handwritten letters.
\newblock \emph{CoRR}, abs/1702.05373, 2017.

\bibitem[Cubuk et~al.(2020)Cubuk, Zoph, Shlens, and Le]{randaugment}
Ekin~Dogus Cubuk, Barret Zoph, Jon Shlens, and Quoc Le.
\newblock {RandAugment}: Practical automated data augmentation with a reduced
  search space.
\newblock In \emph{Advances in Neural Information Processing Systems 33: Annual
  Conference on Neural Information Processing Systems, {NeurIPS}, virtual,
  December 6-12}, 2020.

\bibitem[Delgado and Moral(1987)]{DELGADO1987311}
M.~Delgado and S.~Moral.
\newblock On the concept of possibility-probability consistency.
\newblock \emph{Fuzzy Sets and Systems}, 21\penalty0 (3):\penalty0 311--318,
  1987.
\newblock ISSN 0165-0114.

\bibitem[Demir et~al.(2021)Demir, Moussallem, Heindorf, and
  Ngonga~Ngomo]{pmlr-v157-demir21a}
Caglar Demir, Diego Moussallem, Stefan Heindorf, and Axel-Cyrille Ngonga~Ngomo.
\newblock Convolutional hypercomplex embeddings for link prediction.
\newblock In \emph{Proc. of The 13th Asian Conference on Machine Learning,
  {ACML}, virtual, November 17-19}, volume 157 of \emph{Proceedings of Machine
  Learning Research}, pages 656--671. PMLR, 2021.

\bibitem[Demir et~al.(2022)Demir, Lienen, and
  Ngomo]{https://doi.org/10.48550/arxiv.2205.06560}
Caglar Demir, Julian Lienen, and Axel{-}Cyrille~Ngonga Ngomo.
\newblock Kronecker decomposition for knowledge graph embeddings.
\newblock In \emph{Proc. of the 33rd {ACM} Conference on Hypertext and Social
  Media, {HT}, Barcelona, Spain, June 28 - July 1}, pages 1--10. {ACM}, 2022.

\bibitem[Dettmers et~al.(2018)Dettmers, Minervini, Stenetorp, and
  Riedel]{dettmers2018convolutional}
Tim Dettmers, Pasquale Minervini, Pontus Stenetorp, and Sebastian Riedel.
\newblock Convolutional 2d knowledge graph embeddings.
\newblock In \emph{Proc. of the 32nd {AAAI} Conference on Artificial
  Intelligence, {AAAI}, New Orleans, Louisiana, USA, February 2-7}, pages
  1811--1818. {AAAI} Press, 2018.

\bibitem[Doersch et~al.(2015)Doersch, Gupta, and Efros]{Doersch2016}
Carl Doersch, Abhinav Gupta, and Alexei~A. Efros.
\newblock Unsupervised visual representation learning by context prediction.
\newblock In \emph{Proc. of the {IEEE} International Conference on Computer
  Vision, {ICCV}, Santiago, Chile, December 7-13}, pages 1422--1430. {IEEE}
  Computer Society, 2015.

\bibitem[Du et~al.(2021)Du, Grave, Gunel, Chaudhary, Celebi, Auli, Stoyanov,
  and Conneau]{DBLP:conf/naacl/DuGGCCASC21}
Jingfei Du, Edouard Grave, Beliz Gunel, Vishrav Chaudhary, Onur Celebi, Michael
  Auli, Veselin Stoyanov, and Alexis Conneau.
\newblock Self-training improves pre-training for natural language
  understanding.
\newblock In \emph{Proceedings of the 2021 Conference of the North American
  Chapter of the Association for Computational Linguistics: Human Language
  Technologies, {NAACL-HLT} 2021, Online, June 6-11, 2021}, pages 5408--5418.
  Association for Computational Linguistics, 2021.

\bibitem[Dubois and Prade(2004)]{Dubois2004PossibilityTP}
Didier Dubois and Henri Prade.
\newblock Possibility theory, probability theory and multiple-valued logics: A
  clarification.
\newblock \emph{Annals of Mathematics and Artificial Intelligence},
  32:\penalty0 35--66, 2004.

\bibitem[Frank and Wolfe(1956)]{Frank1956AnAF}
Marguerite Frank and Philip Wolfe.
\newblock An algorithm for quadratic programming.
\newblock \emph{Naval Research Logistics Quarterly}, 3:\penalty0 95--110, 1956.

\bibitem[Gal and Ghahramani(2016)]{mcdropout}
Yarin Gal and Zoubin Ghahramani.
\newblock Dropout as a {Bayesian} approximation: Representing model uncertainty
  in deep learning.
\newblock In \emph{Proc. of the 33nd International Conference on Machine
  Learning, {ICML}, New York City, NY, USA, June 19-24}, volume~48 of
  \emph{{JMLR} Workshop and Conference Proceedings}, pages 1050--1059.
  JMLR.org, 2016.

\bibitem[Ghiasi et~al.(2018)Ghiasi, Lin, and Le]{DBLP:conf/nips/GhiasiLL18}
Golnaz Ghiasi, Tsung{-}Yi Lin, and Quoc~V. Le.
\newblock {DropBlock}: {A} regularization method for convolutional networks.
\newblock In \emph{Advances in Neural Information Processing Systems 31: Annual
  Conference on Neural Information Processing Systems, {NeurIPS}, Montreal,
  Quebec, Canada, December 3-8}, pages 10750--10760, 2018.

\bibitem[Godard et~al.(2019)Godard, Aodha, Firman, and
  Brostow]{DBLP:conf/iccv/GodardAFB19}
Cl{\'{e}}ment Godard, Oisin~Mac Aodha, Michael Firman, and Gabriel~J. Brostow.
\newblock Digging into self-supervised monocular depth estimation.
\newblock In \emph{Proc. of the {IEEE/CVF} International Conference on Computer
  Vision, {ICCV}, Seoul, Korea (South), October 27 - November 2}, pages
  3827--3837. {IEEE}, 2019.

\bibitem[Grill et~al.(2020)Grill, Strub, Altch{\'{e}}, Tallec, Richemond,
  Buchatskaya, Doersch, Pires, Guo, Azar, Piot, Kavukcuoglu, Munos, and
  Valko]{NEURIPS2020_f3ada80d}
Jean{-}Bastien Grill, Florian Strub, Florent Altch{\'{e}}, Corentin Tallec,
  Pierre~H. Richemond, Elena Buchatskaya, Carl Doersch, Bernardo~{\'{A}}vila
  Pires, Zhaohan Guo, Mohammad~Gheshlaghi Azar, Bilal Piot, Koray Kavukcuoglu,
  R{\'{e}}mi Munos, and Michal Valko.
\newblock Bootstrap your own latent - {A} new approach to self-supervised
  learning.
\newblock In \emph{Advances in Neural Information Processing Systems 33: Annual
  Conference on Neural Information Processing Systems, {NeurIPS}, virtual,
  December 6-12}, 2020.

\bibitem[Hogan et~al.(2020)Hogan, Blomqvist, Cochez, d'Amato, de~Melo,
  Gutierrez, Gayo, Kirrane, Neumaier, Polleres, et~al.]{hogan2020knowledge}
Aidan Hogan, Eva Blomqvist, Michael Cochez, Claudia d'Amato, Gerard de~Melo,
  Claudio Gutierrez, Jos{\'e} Emilio~Labra Gayo, Sabrina Kirrane, Sebastian
  Neumaier, Axel Polleres, et~al.
\newblock Knowledge graphs.
\newblock \emph{arXiv preprint arXiv:2003.02320}, 2020.

\bibitem[H{\"{u}}llermeier and Cheng(2015)]{huellermeierchengosl}
Eyke H{\"{u}}llermeier and Weiwei Cheng.
\newblock Superset learning based on generalized loss minimization.
\newblock In \emph{Proc. of the European Conference on Machine Learning and
  Knowledge Discovery in Databases, {ECML} {PKDD}, Porto, Portugal, September
  7-11, Proc. Part {II}}, volume 9285 of \emph{LNCS}, pages 260--275. Springer,
  2015.

\bibitem[H{\"{u}}llermeier and Waegeman(2021)]{DBLP:journals/ml/HullermeierW21}
Eyke H{\"{u}}llermeier and Willem Waegeman.
\newblock Aleatoric and epistemic uncertainty in machine learning: an
  introduction to concepts and methods.
\newblock \emph{Mach. Learn.}, 110\penalty0 (3):\penalty0 457--506, 2021.

\bibitem[Jaggi(2013)]{DBLP:conf/icml/Jaggi13}
Martin Jaggi.
\newblock Revisiting frank-wolfe: Projection-free sparse convex optimization.
\newblock In \emph{Proc. of the 30th International Conference on Machine
  Learning, {ICML}, Atlanta, GA, USA, June 16-21}, volume~28 of \emph{{JMLR}
  Workshop and Conference Proceedings}, pages 427--435. JMLR.org, 2013.

\bibitem[Johansson et~al.(2015)Johansson, Ahlberg, Bostr{\"{o}}m, Carlsson,
  Linusson, and S{\"{o}}nstr{\"{o}}d]{DBLP:conf/slds/JohanssonABCLS15}
Ulf Johansson, Ernst Ahlberg, Henrik Bostr{\"{o}}m, Lars Carlsson, Henrik
  Linusson, and Cecilia S{\"{o}}nstr{\"{o}}d.
\newblock Handling small calibration sets in mondrian inductive conformal
  regressors.
\newblock In \emph{Proc. of the Statistical Learning and Data Sciences - Third
  International Symposium, {SLDS}, Egham, UK, April 20-23}, volume 9047 of
  \emph{Lecture Notes in Computer Science}, pages 271--280. Springer, 2015.

\bibitem[Kim et~al.(2020)Kim, Xu, and Barber]{DBLP:conf/nips/KimXB20}
Byol Kim, Chen Xu, and Rina~Foygel Barber.
\newblock Predictive inference is free with the jackknife+-after-bootstrap.
\newblock In \emph{Advances in Neural Information Processing Systems 33: Annual
  Conference on Neural Information Processing Systems, {NeurIPS}, virtual,
  December 6-12}, 2020.

\bibitem[Kim et~al.(2021)Kim, Ji, Yoon, and Hwang]{DBLP:conf/iccv/KimJYH21}
Kyungyul Kim, Byeongmoon Ji, Doyoung Yoon, and Sangheum Hwang.
\newblock Self-knowledge distillation with progressive refinement of targets.
\newblock In \emph{2021 {IEEE/CVF} International Conference on Computer Vision,
  {ICCV} 2021, Montreal, QC, Canada, October 10-17, 2021}, pages 6547--6556.
  {IEEE}, 2021.

\bibitem[Krizhevsky and Hinton(2009)]{CIFAR}
Alex Krizhevsky and Geoffrey Hinton.
\newblock Learning multiple layers of features from tiny images.
\newblock Technical report, University of Toronto, Toronto, Canada, 2009.

\bibitem[Kroupa(2006)]{kroupa2006many}
Tom{\'a}{\v{s}} Kroupa.
\newblock How many extreme points does the set of probabilities dominated by a
  possibility measure have.
\newblock In \emph{Proc. of 7th Workshop on Uncertainty Processing WUPES},
  volume~6, pages 89--95, 2006.

\bibitem[Lee(2013)]{lee2013pseudo}
Dong-Hyun Lee.
\newblock Pseudo-label: The simple and efficient semi-supervised learning
  method for deep neural networks.
\newblock In \emph{Workshop on Challenges in Representation Learning,
  International Conference on Machine Learning, {ICML}, Atlanta, GA, USA, June
  16-21}, volume~3, 2013.

\bibitem[Lei et~al.(2018)Lei, G’Sell, Rinaldo, Tibshirani, and
  Wasserman]{lei2018distribution}
Jing Lei, Max G’Sell, Alessandro Rinaldo, Ryan~J Tibshirani, and Larry
  Wasserman.
\newblock Distribution-free predictive inference for regression.
\newblock \emph{Journal of the American Statistical Association}, 113\penalty0
  (523):\penalty0 1094--1111, 2018.

\bibitem[Li et~al.(2017)Li, Jamieson, DeSalvo, Rostamizadeh, and
  Talwalkar]{DBLP:journals/jmlr/LiJDRT17}
Lisha Li, Kevin~G. Jamieson, Giulia DeSalvo, Afshin Rostamizadeh, and Ameet
  Talwalkar.
\newblock Hyperband: {A} novel bandit-based approach to hyperparameter
  optimization.
\newblock \emph{J. Mach. Learn. Res.}, 18:\penalty0 185:1--185:52, 2017.

\bibitem[Li et~al.(2020)Li, Cao, and Chen]{DBLP:journals/access/LiCC20b}
Yan Li, Xiaofeng Cao, and Honghui Chen.
\newblock Fully projection-free proximal stochastic gradient method with
  optimal convergence rates.
\newblock \emph{{IEEE} Access}, 8:\penalty0 165904--165912, 2020.

\bibitem[Lienen and H{\"{u}}llermeier(2021)]{cssl}
Julian Lienen and Eyke H{\"{u}}llermeier.
\newblock Credal self-supervised learning.
\newblock In \emph{Advances in Neural Information Processing Systems 34: Annual
  Conference on Neural Information Processing Systems, {NeurIPS}, virtual,
  December 6-14}, pages 14370--14382, 2021.

\bibitem[Lienen and H{\"u}llermeier(2021)]{lienen2021label}
Julian Lienen and Eyke H{\"u}llermeier.
\newblock From label smoothing to label relaxation.
\newblock In \emph{Proc. of the 35th {AAAI} Conference on Artificial
  Intelligence, virtual, February 2-9}, 2021.

\bibitem[Lienen and Hüllermeier(2021)]{Lienen2021instweighting}
Julian Lienen and Eyke Hüllermeier.
\newblock Instance weighting through data imprecisiation.
\newblock \emph{Int. J. Approx. Reason.}, 134:\penalty0 1--14, 2021.
\newblock ISSN 0888-613X.

\bibitem[Loshchilov and Hutter(2017)]{DBLP:conf/iclr/LoshchilovH17}
Ilya Loshchilov and Frank Hutter.
\newblock {SGDR}: Stochastic gradient descent with warm restarts.
\newblock In \emph{Proc. of the 5th International Conference on Learning
  Representations, {ICLR}, Toulon, France, April 24-26}. OpenReview.net, 2017.

\bibitem[Mukherjee and Awadallah(2020)]{DBLP:conf/nips/MukherjeeA20}
Subhabrata Mukherjee and Ahmed~Hassan Awadallah.
\newblock Uncertainty-aware self-training for few-shot text classification.
\newblock In \emph{Advances in Neural Information Processing Systems 33: Annual
  Conference on Neural Information Processing Systems, {NeurIPS}, virtual,
  December 6-12}, 2020.

\bibitem[M{\"{u}}ller et~al.(2019)M{\"{u}}ller, Kornblith, and
  Hinton]{mueller_ls}
Rafael M{\"{u}}ller, Simon Kornblith, and Geoffrey~E. Hinton.
\newblock When does label smoothing help?
\newblock In \emph{Advances in Neural Information Processing Systems 32: Annual
  Conference on Neural Information Processing Systems, {NeurIPS}, Vancouver,
  BC, Canada, December 8-14}, pages 4696--4705, 2019.

\bibitem[Netzer et~al.(2011)Netzer, Wang, Coates, Bissacco, Wu, and Ng]{svhn}
Yuval Netzer, Tao Wang, Adam Coates, Alessandro Bissacco, Bo~Wu, and Andrew~Y.
  Ng.
\newblock Reading digits in natural images with unsupervised feature learning.
\newblock In \emph{Advances in Neural Information Processing Systems 33: Annual
  Conference on Neural Information Processing Systems, {NIPS}, Granada, Spain,
  November 12-17, Workshop on Deep Learning and Unsupervised Feature Learning},
  2011.

\bibitem[Nickel et~al.(2015)Nickel, Murphy, Tresp, and
  Gabrilovich]{nickel2015review}
Maximilian Nickel, Kevin Murphy, Volker Tresp, and Evgeniy Gabrilovich.
\newblock A review of relational machine learning for knowledge graphs.
\newblock \emph{Proceedings of the IEEE}, 104\penalty0 (1):\penalty0 11--33,
  2015.

\bibitem[Oliver et~al.(2018)Oliver, Odena, Raffel, Cubuk, and
  Goodfellow]{DBLP:conf/nips/OliverORCG18}
Avital Oliver, Augustus Odena, Colin Raffel, Ekin~Dogus Cubuk, and Ian~J.
  Goodfellow.
\newblock Realistic evaluation of deep semi-supervised learning algorithms.
\newblock In \emph{Advances in Neural Information Processing Systems 31: Annual
  Conference on Neural Information Processing Systems, {NeurIPS}, Montreal,
  Quebec, Canada, December 3-8}, pages 3239--3250, 2018.

\bibitem[Papadopoulos(2008)]{papadopoulos2008inductive}
Harris Papadopoulos.
\newblock \emph{Inductive conformal prediction: Theory and application to
  neural networks}.
\newblock INTECH Open Access Publisher Rijeka, 2008.

\bibitem[Papadopoulos et~al.(2007)Papadopoulos, Vovk, and
  Gammerman]{DBLP:conf/ictai/PapadopoulosVG07}
Harris Papadopoulos, Volodya Vovk, and Alexander Gammerman.
\newblock Conformal prediction with neural networks.
\newblock In \emph{19th {IEEE} International Conference on Tools with
  Artificial Intelligence, {ICTAI}, Patras, Greece, October 29-31, Volume 2},
  pages 388--395. {IEEE} Computer Society, 2007.

\bibitem[Pham et~al.(2021)Pham, Dai, Xie, and Le]{DBLP:conf/cvpr/PhamDXL21}
Hieu Pham, Zihang Dai, Qizhe Xie, and Quoc~V. Le.
\newblock Meta pseudo labels.
\newblock In \emph{{IEEE} Conference on Computer Vision and Pattern
  Recognition, {CVPR} 2021, virtual, June 19-25, 2021}, pages 11557--11568.
  Computer Vision Foundation / {IEEE}, 2021.

\bibitem[Ren et~al.(2020)Ren, Yeh, and Schwing]{DBLP:conf/nips/RenYS20}
Zhongzheng Ren, Raymond~A. Yeh, and Alexander~G. Schwing.
\newblock Not all unlabeled data are equal: Learning to weight data in
  semi-supervised learning.
\newblock In \emph{Advances in Neural Information Processing Systems 33: Annual
  Conference on Neural Information Processing Systems, {NeurIPS}, virtual,
  December 6-12}, 2020.

\bibitem[Rizve et~al.(2021)Rizve, Duarte, Rawat, and
  Shah]{DBLP:conf/iclr/RizveDRS21}
Mamshad~Nayeem Rizve, Kevin Duarte, Yogesh~S. Rawat, and Mubarak Shah.
\newblock In defense of pseudo-labeling: An uncertainty-aware pseudo-label
  selection framework for semi-supervised learning.
\newblock In \emph{Proc. of the 9th International Conference on Learning
  Representations, {ICLR}, virtual, May 3-7}. OpenReview.net, 2021.

\bibitem[Rosenberg et~al.(2005)Rosenberg, Hebert, and
  Schneiderman]{DBLP:conf/wacv/RosenbergHS05}
Chuck Rosenberg, Martial Hebert, and Henry Schneiderman.
\newblock Semi-supervised self-training of object detection models.
\newblock In \emph{Proc. of the 7th {IEEE} Workshop on Applications of Computer
  Vision / {IEEE} Workshop on Motion and Video Computing, {WACV/MOTION},
  Breckenridge, CO, {USA}, January 5-7}, pages 29--36. {IEEE} Computer Society,
  2005.

\bibitem[Ruffinelli et~al.(2020)Ruffinelli, Broscheit, and
  Gemulla]{ruffinelli2020you}
Daniel Ruffinelli, Samuel Broscheit, and Rainer Gemulla.
\newblock You {CAN} teach an old dog new tricks! on training knowledge graph
  embeddings.
\newblock In \emph{Proc. of the 8th International Conference on Learning
  Representations, {ICLR}, Addis Ababa, Ethiopia, April 26-30}. OpenReview.net,
  2020.

\bibitem[Sajjadi et~al.(2016)Sajjadi, Javanmardi, and
  Tasdizen]{DBLP:conf/nips/SajjadiJT16}
Mehdi Sajjadi, Mehran Javanmardi, and Tolga Tasdizen.
\newblock Regularization with stochastic transformations and perturbations for
  deep semi-supervised learning.
\newblock In \emph{Advances in Neural Information Processing Systems 29: Annual
  Conference on Neural Information Processing Systems, {NIPS}, Barcelona,
  Spain, December 5-10}, pages 1163--1171, 2016.

\bibitem[Shafer and Vovk(2008)]{DBLP:journals/jmlr/ShaferV08}
Glenn Shafer and Vladimir Vovk.
\newblock A tutorial on conformal prediction.
\newblock \emph{J. Mach. Learn. Res.}, 9:\penalty0 371--421, 2008.

\bibitem[Sohn et~al.(2020)Sohn, Berthelot, Carlini, Zhang, Zhang, Raffel,
  Cubuk, Kurakin, and Li]{fixmatch}
Kihyuk Sohn, David Berthelot, Nicholas Carlini, Zizhao Zhang, Han Zhang, Colin
  Raffel, Ekin~Dogus Cubuk, Alexey Kurakin, and Chun{-}Liang Li.
\newblock {FixMatch}: {S}implifying semi-supervised learning with consistency
  and confidence.
\newblock In \emph{Advances in Neural Information Processing Systems 33: Annual
  Conference on Neural Information Processing Systems, {NeurIPS}, virtual,
  December 6-12}, 2020.

\bibitem[Szegedy et~al.(2016)Szegedy, Vanhoucke, Ioffe, Shlens, and
  Wojna]{szegedy_ls}
Christian Szegedy, Vincent Vanhoucke, Sergey Ioffe, Jonathon Shlens, and
  Zbigniew Wojna.
\newblock Rethinking the {Inception} architecture for computer vision.
\newblock In \emph{Proc. of the {IEEE} Conference on Computer Vision and
  Pattern Recognition, {CVPR}, Las Vegas, NV, USA, June 27-30}, pages
  2818--2826. {IEEE} Computer Society, 2016.

\bibitem[Tanha et~al.(2017)Tanha, van Someren, and Afsarmanesh]{Tanha2017}
Jafar Tanha, Maarten van Someren, and Hamideh Afsarmanesh.
\newblock Semi-supervised self-training for decision tree classifiers.
\newblock \emph{Int. J. Mach. Learn. Cybern.}, 8\penalty0 (1):\penalty0
  355--370, 2017.
\newblock ISSN 1868-808X.

\bibitem[Trouillon et~al.(2016)Trouillon, Welbl, Riedel, Gaussier, and
  Bouchard]{trouillon2016complex}
Th{\'{e}}o Trouillon, Johannes Welbl, Sebastian Riedel, {\'{E}}ric Gaussier,
  and Guillaume Bouchard.
\newblock Complex embeddings for simple link prediction.
\newblock In \emph{Proc. of the 33nd International Conference on Machine
  Learning, {ICML}, New York City, NY, USA, June 19-24}, volume~48 of
  \emph{{JMLR} Workshop and Conference Proceedings}, pages 2071--2080.
  JMLR.org, 2016.

\bibitem[van Engelen and Hoos(2020)]{DBLP:journals/ml/EngelenH20}
Jesper~E. van Engelen and Holger~H. Hoos.
\newblock A survey on semi-supervised learning.
\newblock \emph{Mach. Learn.}, 109\penalty0 (2):\penalty0 373--440, 2020.

\bibitem[Vovk(2013)]{DBLP:journals/ml/Vovk13}
Vladimir Vovk.
\newblock Conditional validity of inductive conformal predictors.
\newblock \emph{Mach. Learn.}, 92\penalty0 (2-3):\penalty0 349--376, 2013.

\bibitem[Vovk et~al.(2005)Vovk, Gammerman, and Shafer]{vovk05}
Vladimir Vovk, Alex Gammerman, and Glenn Shafer.
\newblock \emph{Algorithmic Learning in a Random World}.
\newblock Springer-Verlag, Berlin, Heidelberg, 2005.
\newblock ISBN 0387001522.

\bibitem[Wei et~al.(2021)Wei, Shen, Chen, and Ma]{DBLP:conf/iclr/WeiSCM21}
Colin Wei, Kendrick Shen, Yining Chen, and Tengyu Ma.
\newblock Theoretical analysis of self-training with deep networks on unlabeled
  data.
\newblock In \emph{9th International Conference on Learning Representations,
  {ICLR}, Virtual Event, Austria, May 3-7}. OpenReview.net, 2021.

\bibitem[Xie et~al.(2020{\natexlab{a}})Xie, Dai, Hovy, Luong, and Le]{UDA}
Qizhe Xie, Zihang Dai, Eduard~H. Hovy, Thang Luong, and Quoc Le.
\newblock Unsupervised data augmentation for consistency training.
\newblock In \emph{Advances in Neural Information Processing Systems 33: Annual
  Conference on Neural Information Processing Systems, {NeurIPS}, virtual,
  December 6-12}, 2020{\natexlab{a}}.

\bibitem[Xie et~al.(2020{\natexlab{b}})Xie, Luong, Hovy, and
  Le]{DBLP:conf/cvpr/XieLHL20}
Qizhe Xie, Minh{-}Thang Luong, Eduard~H. Hovy, and Quoc~V. Le.
\newblock Self-training with noisy student improves imagenet classification.
\newblock In \emph{2020 {IEEE/CVF} Conference on Computer Vision and Pattern
  Recognition, {CVPR} 2020, Seattle, WA, USA, June 13-19, 2020}, pages
  10684--10695. Computer Vision Foundation / {IEEE}, 2020{\natexlab{b}}.

\bibitem[Yang et~al.(2014)Yang, Yih, He, Gao, and Deng]{yang2014embedding}
Bishan Yang, Wen-tau Yih, Xiaodong He, Jianfeng Gao, and Li~Deng.
\newblock Embedding entities and relations for learning and inference in
  knowledge bases.
\newblock \emph{arXiv preprint arXiv:1412.6575}, 2014.

\bibitem[Zagoruyko and Komodakis(2016)]{wideresnet}
Sergey Zagoruyko and Nikos Komodakis.
\newblock Wide residual networks.
\newblock In \emph{Proc. of the British Machine Vision Conference, {BMVC},
  York, UK, September 19-22}, 2016.

\bibitem[Zeni et~al.(2020)Zeni, Fontana, and
  Vantini]{DBLP:journals/corr/abs-2005-07972}
Gianluca Zeni, Matteo Fontana, and Simone Vantini.
\newblock Conformal prediction: a unified review of theory and new challenges.
\newblock \emph{CoRR}, abs/2005.07972, 2020.

\bibitem[Zhang et~al.(2021)Zhang, Wang, Hou, Wu, Wang, Okumura, and
  Shinozaki]{DBLP:conf/nips/ZhangWHWWOS21}
Bowen Zhang, Yidong Wang, Wenxin Hou, Hao Wu, Jindong Wang, Manabu Okumura, and
  Takahiro Shinozaki.
\newblock Flexmatch: Boosting semi-supervised learning with curriculum pseudo
  labeling.
\newblock In \emph{Advances in Neural Information Processing Systems 34: Annual
  Conference on Neural Information Processing Systems, {NeurIPS}, virtual,
  December 6-14}, pages 18408--18419, 2021.

\bibitem[Zheng and Yang(2021)]{DBLP:journals/ijcv/ZhengY21}
Zhedong Zheng and Yi~Yang.
\newblock Rectifying pseudo label learning via uncertainty estimation for
  domain adaptive semantic segmentation.
\newblock \emph{Int. J. Comput. Vis.}, 129\penalty0 (4):\penalty0 1106--1120,
  2021.

\bibitem[Škulj(2022)]{https://doi.org/10.48550/arxiv.2201.10161}
Damjan Škulj.
\newblock Normal cones corresponding to credal sets of lower probabilities,
  2022.

\end{thebibliography}

\newpage

\appendix

\section{Pseudo-Code of \ccssl}

\begin{algorithm}[htbp]
    \caption{\ccssl{} with consistency regularization}
    \label{alg:ccssl}
    \begin{algorithmic}[1]
        \REQUIRE Batch of labeled instances with degenerate ground truth distributions $\mathcal{B}_l = \{(\vec{x}_i, p_i)\}_{i=1}^B \in \left(\mathcal{X} \times \mathcal{Y}\right)^B$, unlabeled batch ratio $\mu$, batch $\mathcal{B}_u = \{ \vec{x}_i \}_{i=1}^{\mu B}$ of unlabeled instances, unlabeled loss weight $\lambda_u$, model $\hat{p} : \mathcal{X} \to \mathbb{P}(\mathcal{Y})$, strong and weak augmentation functions $\mathcal{A}_s, \mathcal{A}_w : \mathcal{X} \to \mathcal{X}$, calibration data $\mathcal{D}_{\text{calib}} \subset (\mathcal{X} \times \mathcal{Y})^L$, inductive conformal prediction procedure $ICP : (\mathcal{X} \times \mathcal{Y})^L \times \mathbb{P}(\mathcal{Y}) \to (\mathcal{Y} \to [0,1])$
        \STATE $\mathcal{L}_l = \frac{1}{B} \sum_{(\vec{x}, p) \in \mathcal{B}_l} \kldiv{p}{ \hat{p}(\mathcal{A}_w(\vec{x}))}$
        \STATE Initialize pseudo-labeled batch $\mathcal{U} = \emptyset$
        \FOR{all $\vec{x} \in \mathcal{B}_u$}
            \STATE Derive possibility distribution $\pi = ICP(\mathcal{D}_{\text{calib}}, \hatp(\mathcal{A}_w(\vec{x})))$
            \STATE Apply normalization to $\pi$ such that $\max_{y\in \mathcal{Y}} \pi(y) = 1$ (e.g., as in \cref{eq:norm1})
            \STATE Construct credal set $\mathcal{Q}_\pi$ as in \cref{eq:credal_set}
            \STATE $\mathcal{U} = \mathcal{U} \cup \{(\vec{x}, Q_\pi)\}$
        \ENDFOR
        \STATE $\mathcal{L}_u = \frac{1}{\mu B} \sum_{(\vec{x}, Q_\pi) \in \mathcal{U}} \mathcal{L}^*(Q_\pi, \hat{p}(\mathcal{A}_s(\vec{x})))$ \COMMENT{Can be solved via generalized credal learning procedure (Alg.\ 1 in main paper)}
        
        \RETURN $\mathcal{L}_l + \lambda_u \mathcal{L}_u$
    \end{algorithmic}
\end{algorithm}

\section{Experimental Details}

\subsection{Settings}

To conduct the experiments as presented in the paper, we followed the basic semi-supervised learning evaluation scheme as in \citep{fixmatch,cssl}. However, as opposed to previous evaluations, we reduce the number of iterations to $2^{18}$ with a batch size of $32$, which allows for a proper hyperparamter optimization of all methods. To this end, we employ a Bayesian optimization\footnote{We used the Bayesian optimization implementation as offered by \textit{Weights \& Biases} \citep{wandb} with default parameters.} with $20$ runs for each combination of dataset and number of labels on a separate validation split. Moreover, we use Hyperband \citep{DBLP:journals/jmlr/LiJDRT17} with $\eta = 3$ and $20$ minimum epochs (that is, iterating over all unlabeled instances once) for early stopping.
Due to the computational complexity of this procedure, we determined the best hyperparameter on a fixed seed and applied those parameters to all repetitions with different seeds for the same dataset and number of labels combination. Albeit not being ideal, such routine still improves fairness compared to previous evaluations which do not apply the same hyperparameter tuning procedure to all regarded baselines.

\begin{table}[htbp]
    \caption{Hyperparameter search spaces considered in the optimization.}
    \centering
    \begin{tabular}{lll}
    \toprule
    Method & Parameter & Values \\
    \midrule
    \multirow{4}{*}{All} & Initial learning rate & $\{0.005, 0.01, 0.03, 0.05, 0.1\}$ \\
    & Unlabeled batch multiplicity $\mu$ & $\{ 3, 7 \}$ \\
    & Weight decay & $\{ 0.0005, 0.0001 \}$ \\
    & Unlabeled loss weight $\lambda_u$ & $\{ 1 \}$ \\
    \midrule
    \multirow{2}{*}{FixMatch (DA), UDA} & Confidence threshold $\tau$ & $\{0.7, 0.8, 0.9, 0.95 \}$ \\
    & Temperature & $\{0.5, 1\}$ \\
    \midrule
    \multirow{2}{*}{FlexMatch} & Cutoff threshold & $\{ 0.8, 0.9, 0.95 \}$ \\
    & Threshold warmup & $\{ \text{True}, \text{False} \}$ \\
    \midrule
    \ccssl-diff & Calibration split & $\{0.1, 0.25, 0.5\}$ \\
    \midrule
    \multirow{2}{*}{\ccssl-prop} & Calibration split & $\{0.1, 0.25, 0.5\}$ \\
    & Non-conf. sensitivity $\gamma$ & $\{ 0.01, 0.1, 1 \}$ \\
    \bottomrule
    \end{tabular}
    \label{table:hyperparameters}
\end{table}

\cref{table:hyperparameters} shows the considered parameter spaces. To train the models, we use SGD with a Nesterov momentum of $0.9$. We further employ cosine annealing as learning rate schedule \citep{DBLP:conf/iclr/LoshchilovH17}. Moreover, we apply exponential moving averaging with a fixed decay of $0.999$ to the weights.

\subsection{Code and Environment}

Our official implementation is publicly available.\footnote{\url{https://github.com/julilien/C2S2L}} Therein, we implemented all methods using PyTorch\footnote{\url{https://pytorch.org/}, BSD-style license}, where we reused the official implementations if available. We proceeded from a popular FixMatch re-implementation in PyTorch\footnote{\url{https://github.com/kekmodel/FixMatch-pytorch}, MIT license} for the image classification experiments, which we carefully checked for any differences to the original repository, and embedded all other baselines into it. To conduct the experiments, we used several Nvidia A100 GPUs in a modern high performance cluster environment.

\section{Additional Results}

\subsection{Predictor Calibration}

For completeness, we present the quality of the prediction probability distributions in the large-scale image classification experiments with respect to their expected calibration errors in \cref{tab:ece}. Both \ccssl{} variants demonstrate favorable calibration properties, whereas \ccssl-prop often outperforms all other methods.

\begin{table}[htbp]
    \centering
    \caption{Averaged ECE scores with $15$ bins over $3$ seeds for different numbers of labels. \textbf{Bold} entries indicate the best performing method per column. The standard deviation is a factor of $1e^{-2}$.}
    \label{tab:ece}
    \resizebox{\columnwidth}{!}{%
    \begin{tabular}{lcccccccccc}
    \toprule
    & \multicolumn{3}{c}{CIFAR-10} & \multicolumn{3}{c}{CIFAR-100} & \multicolumn{3}{c}{SVHN} & STL-10 \\
    \cmidrule(lr){2-4}
    \cmidrule(lr){5-7}
    \cmidrule(lr){8-10}
    \cmidrule(lr){11-11}
     & 40 lab. & 250 lab. & 4000 lab. & 400 lab. & 2500 lab. & 10000 lab. & 40 lab. & 250 lab. & 1000 lab. & 1000 lab. \\
    \midrule
    UDA & 0.159 {\small $\pm${7.9}} & \textbf{0.051} {\small $\pm${0.2}} & 0.046 {\small $\pm${0.1}} & 0.420 {\small $\pm${2.6}} & 0.232 {\small $\pm${0.5}} & 0.173 {\small $\pm${0.5}} & 0.124 {\small $\pm${1.1}} & 0.037 {\small $\pm${1.3}} & 0.030 {\small $\pm${0.1}} & 0.140 {\small $\pm${0.9}} \\
    FixMatch & 0.136 {\small $\pm${4.3}} & 0.059 {\small $\pm${1.0}} & 0.048 {\small $\pm${0.1}} & 0.417 {\small $\pm${2.3}} & 0.254 {\small $\pm${0.5}} & 0.168 {\small $\pm${0.1}} & 0.275 {\small $\pm${34.9}} & 0.038 {\small $\pm${1.3}} & \textbf{0.027} {\small $\pm${0.1}} & 0.128 {\small $\pm${0.8}} \\
    FixMatch DA & 0.131 {\small $\pm${11.1}} & 0.057 {\small $\pm${0.6}} & 0.047 {\small $\pm${0.1}} & \textbf{0.347} {\small $\pm${2.4}} & 0.234 {\small $\pm${0.4}} & 0.169 {\small $\pm${0.1}} & \textbf{0.101} {\small $\pm${3.1}} & 0.041 {\small $\pm${1.5}} & 0.029 {\small $\pm${0.1}} & 0.127 {\small $\pm${0.8}} \\
    FlexMatch & 0.114 {\small $\pm${3.7}} & 0.057 {\small $\pm${0.5}} & 0.052 {\small $\pm${0.2}} & 0.404 {\small $\pm${0.3}} & 0.232 {\small $\pm${0.5}} & 0.169 {\small $\pm${0.2}} & 0.126 {\small $\pm${1.3}} & 0.039 {\small $\pm${0.3}} & 0.033 {\small $\pm${0.4}} & 0.130 {\small $\pm${0.9}} \\
    \midrule
    CSSL & \textbf{0.079} {\small $\pm${4.4}} & 0.053 {\small $\pm${0.1}} & 0.045 {\small $\pm${0.0}} & 0.368 {\small $\pm${0.8}} & 0.233 {\small $\pm${0.6}} & 0.167 {\small $\pm${0.2}} & 0.121 {\small $\pm${5.1}} & 0.041 {\small $\pm${1.1}} & 0.032 {\small $\pm${0.1}} & 0.126 {\small $\pm${0.9}} \\
    \midrule
    \ccssl{}-diff & 0.139 {\small $\pm${3.2}} & 0.058 {\small $\pm${0.1}} & 0.045 {\small $\pm${0.1}} & 0.352 {\small $\pm${1.6}} & \textbf{0.222} {\small $\pm${0.3}} & 0.165 {\small $\pm${0.1}} & 0.117 {\small $\pm${7.2}} & 0.044 {\small $\pm${1.2}} & 0.030 {\small $\pm${0.3}} & 0.129 {\small $\pm${0.6}}\\
    \ccssl{}-prop & 0.101 {\small $\pm${2.2}} & 0.054 {\small $\pm${0.2}} & \textbf{0.044} {\small $\pm${0.1}} & \textbf{0.347} {\small $\pm${2.3}} & 0.227 {\small $\pm${0.2}} & \textbf{0.162} {\small $\pm${0.2}} & 0.128 {\small $\pm${8.6}} & \textbf{0.033} {\small $\pm${0.1}} & 0.028 {\small $\pm${0.2}} & \textbf{0.124} {\small $\pm${0.8}} \\
    \bottomrule
    \end{tabular}
    }
\end{table}

\subsection{Learning Curves}

\begin{figure}[htbp]
    \centering
    \includegraphics[width=\textwidth]{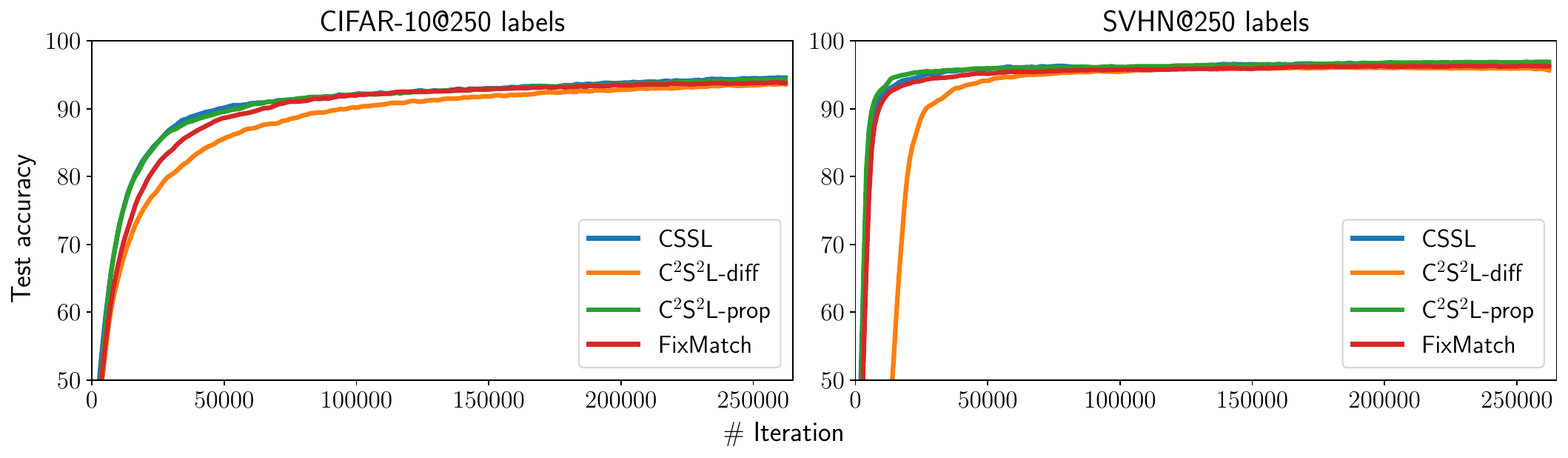}
    \caption{Test accuracies over the course of the training. The results are averaged over $3$ different random seeds.}
    \label{fig:abl:learning_curves}
\end{figure}

In addition, we provide the learning curves in terms of test accuracy per training iteration averaged over $3$ random seeds on CIFAR-10 and SVHN with 250 labels each in \cref{fig:abl:learning_curves}. As can be seen, \ccssl-prop shows a similar training efficiency as CSSL, whereas \ccssl-diff realizes a more cautious learning than all other methods. This becomes particularly visible for SVHN. Here, \ccssl-prop shows an even improved training efficiency compared to CSSL. Remarkably, the \ccssl{} variants have less labeled supervision available as part of it is separated in form of the calibration split, which can be one reason for the cautiousness of \ccssl{}-diff in the first iterations. Together with the validity gains in the pseudo-label quality, especially \ccssl-prop demonstrates its effectiveness to the task of semi-supervised learning. 

\subsection{Mitigation of Confirmation Biases: Imbalanced Data}

As discussed in the main paper, consistency regularization (CR) and the validity of conformal (credal) pseudo-labels serve as means to tackle confirmation biases. In addition to the previously shown experiments, we consider here another experimental setting aiming to isolate their individual contributions to the mitigation of confirmation biases. Namely, we look at EMNIST-ByClass \citep{DBLP:journals/corr/CohenATS17} consisting of 814,255 handwritten letters (both lower and upper case) and digits, constituting 62 imbalanced classes in total.\footnote{We refer to Fig.\ 2 in \citep{DBLP:journals/corr/CohenATS17} for a detailed overview over the class distribution.} The class imbalance leads to an attenuation of CR as frequently occurring classes dominate underrepresented ones. More technically, the neighborhood of instances belonging to an underrepresented region may be mostly populated by instances from different classes, thereby violating the expansion assumption \citep{DBLP:conf/iclr/WeiSCM21}. Consequently, from an empirical risk minimization point of view, it might be more reasonable to attribute larger regions populated by unlabeled instances from the minority class to a majority classes. This leaves the former overlooked, so that a ``de-noising'' of the pseudo-labels is not possible anymore. Again, the validity guarantees provided by the conformal prediction framework serve as a fallback here.

For this dataset, we consider either 250 or 500 labeled instances and train for $2^{15}$ iterations, keeping all other experimental parameters the same as before. Adopting the reported optimal hyperparameters for CSSL in \citep{cssl}, we used a fixed learning rate of $0.03$, SGD with Nesterov momentum of $0.9$ and trained a Wide ResNet-28-2 with a batch size of $32$ for three different seeds. Table \ref{table:conf_bias:emnist} shows the resulting generalization performances for the individual methods. Moreover, Fig. \ref{fig:rebut:emnist} presents the learning curves, which show similar but even more extreme trends as also observed in the CIFAR-100 experiments presented in Sec.\ 5.2 of the main paper.

\begin{table}[htbp]
    \caption{Test accuracies on EMNIST-ByClass. The presented results and their standard deviations are computed over $3$ seeds.}
    \centering
    \begin{tabular}{lcc}
      \toprule
        & 250 lab. & 500 lab. \\
        \midrule
    CSSL & 49.96 {\small $\pm${4.08}} & 62.74 {\small $\pm${2.15}} \\
      \midrule 
      \ccssl-diff & \textbf{59.22} {\small $\pm${3.93}} & 67.39 {\small $\pm${3.74}}   \\
      \ccssl-prop & 57.90 {\small $\pm${5.85}} & \textbf{67.62} {\small $\pm${4.61}}  \\
    \bottomrule
\label{table:conf_bias:emnist}
\end{tabular}
\end{table}

\begin{figure}[htbp]
    \centering
    \begin{minipage}{.49\textwidth}
        \centering
        \includegraphics[width=\linewidth]{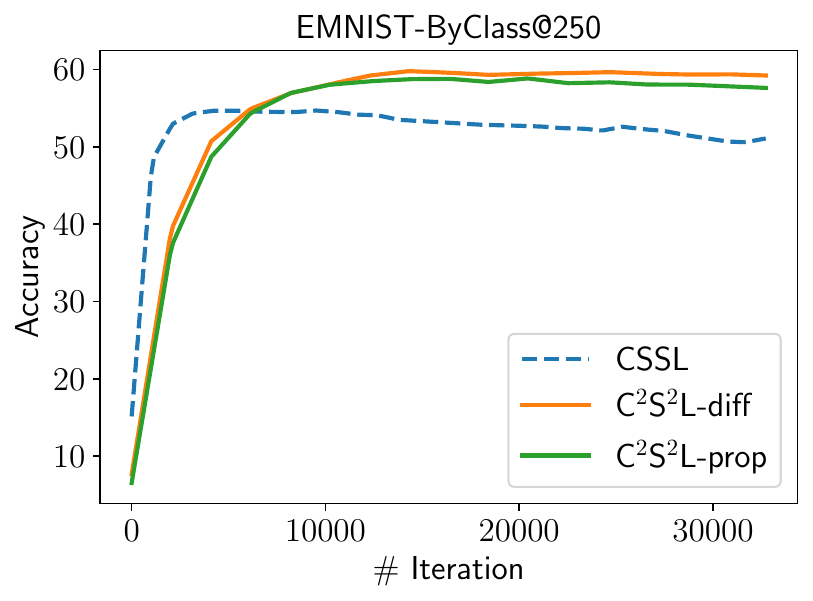}
    \end{minipage}
    \begin{minipage}{.49\textwidth}
        \centering
        \includegraphics[width=\linewidth]{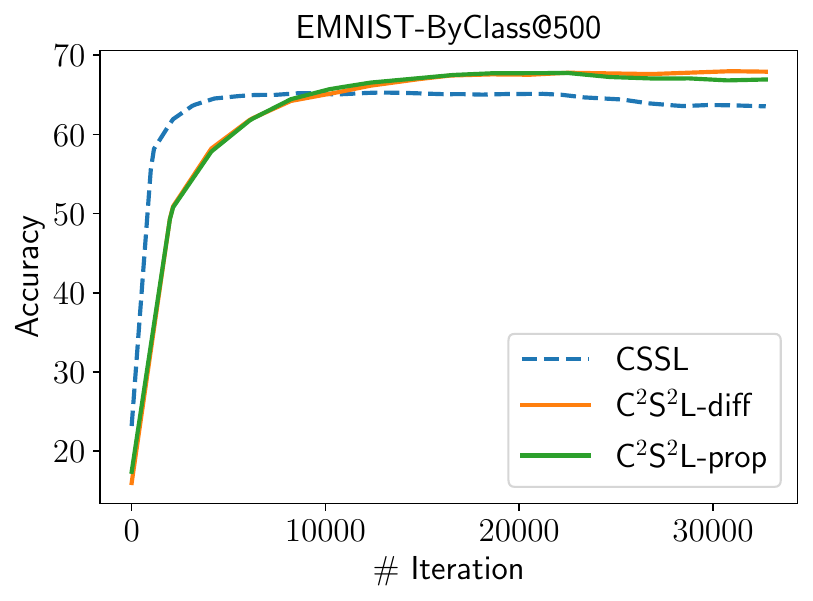}
    \end{minipage}
    \caption{Averaged learning curves in terms of test accuracies over three seeds on EMNIST-ByClass with 250 and 500 labels.}
    \label{fig:rebut:emnist}
\end{figure}

\subsection{Ablation Studies}

In the following, we present further ablation studies to investigate properties of \ccssl{} more thoroughly. If not stated otherwise, we set the initial learning rate to $0.03$, $\lambda_u=1$, $\mu=7$ and the weight decay to $0.0005$. Also, we consider calibration size fractions of $0.25$ by default.

\paragraph{Possibility Distribution Normalization} 

Conformal Credal Pseudo-Labeling involves normalizing possibility distributions $\pi : \mathcal{Y} \fromto [0,1]$ to satisfy $\max_{y \in \mathcal{Y}} \pi(y) = 1$. In \cref{eq:norm1} of the paper, we introduced a proportion-based normalization technique, to which we refer as \textit{normalization 1}. As an alternative, one can consider the following normalization \citep{DBLP:conf/isipta/CellaM21}, to which we refer as \textit{normalization 2}:

\begin{equation}
    \label{eq:norm2}
    \pi(\hat{y}) = \left\{ \begin{array}{cl}
        1 & \text{if } \hat{y} = \argmax_{y' \in \mathcal{Y}} \pi(y'), \\
        \pi(\hat{y}) & \text{otherwise.}
    \end{array} \right.
\end{equation}

It is easy to see that (\ref{eq:norm2}) leads to smaller credal sets due to $0 \leq \pi(\cdot) \leq 1$.

\begin{table}[htbp]
    \caption{Test accuracies per normalization and \ccssl{} variant for $250$ labels each. The presented results and their standard deviations are computed over $3$ seeds.}
    \centering
    \begin{tabular}{lcccc}
      \toprule
        &\multicolumn{2}{c}{CIFAR-10}    & \multicolumn{2}{c}{SVHN}\\
         \cmidrule(lr){2-3}  \cmidrule(lr){4-5}                       
         & Norm. 1 & Norm. 2 &  Norm. 1 & Norm. 2 \\
      \midrule 
      \ccssl-diff & 92.38 {\small $\pm${1.14}} & 92.71 {\small $\pm${0.81}} & 95.93 {\small $\pm${1.78}} & 95.70 {\small $\pm${1.81}}   \\
      \ccssl-prop & 92.26 {\small $\pm${2.30}} & 91.84 {\small $\pm${1.13}} & 96.61 {\small $\pm${1.08}} & 95.72 {\small $\pm${1.50}}   \\
    \bottomrule
\label{table:abl:normalization}
\end{tabular}
\end{table}

\cref{table:abl:normalization} shows the results. Although the second normalization strategy leads to smaller credal sets, it leads to inferior generalization performance for the proportion-based non-conformity measure, suggesting that an overly extreme credal set construction may be suboptimal. This supports the adequacy of the first normalization strategy as employed in \ccssl{} by default.

\begin{figure}[htbp]
    \centering
    \includegraphics[width=\textwidth]{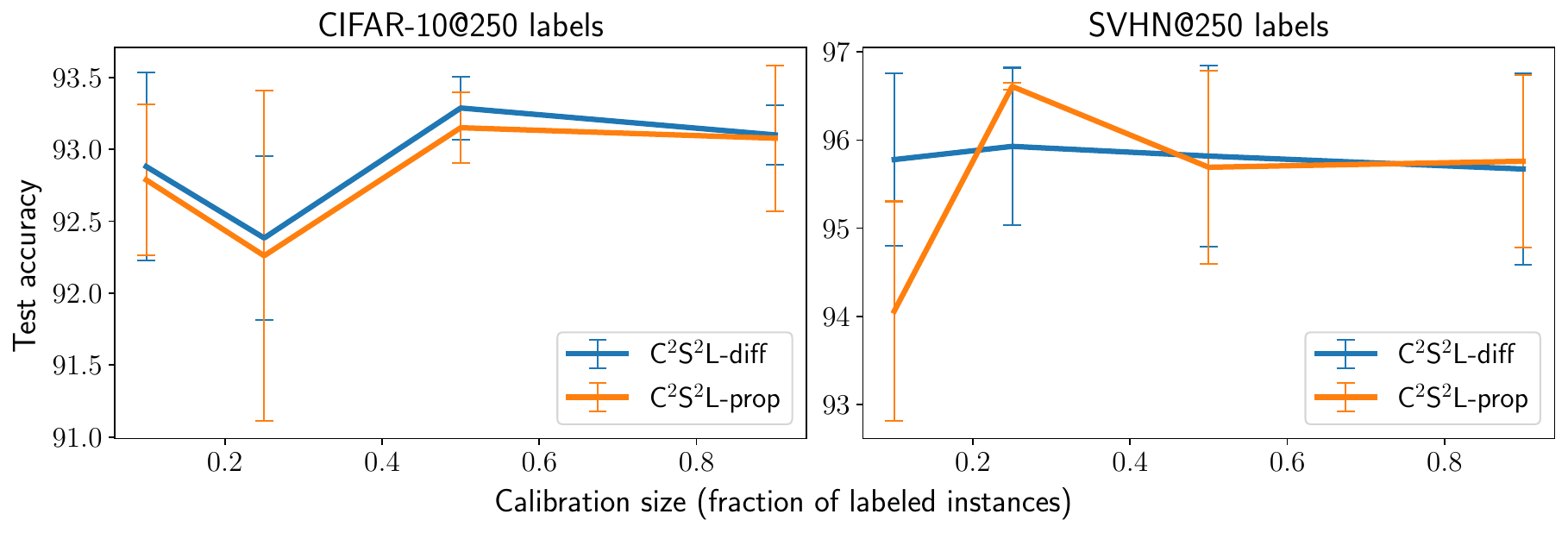}
    \caption{Test accuracy and the standard deviation per calibration size for the two variants of \ccssl. The results are averaged over $3$ different random seeds.}
    \label{fig:abl:calibration}
\end{figure}

\paragraph{Calibration Split Size} The calibration size used to determine the number of calibration instances affects the quality of the credal sets. Here, we consider calibration split proportions in $\{0.1, 0.25, 0.5, 0.9\}$. As can be seen in \cref{fig:abl:calibration}, the differences in the performances do not vary too much. On CIFAR-10 with 250 labels, no clear trend can be observed. However, the deviations of runs with a fraction of $0.25$ appear significantly higher than for the other sizes. This could be due to the sacrifice of too many labeled instances without achieving too precise pseudo-supervision. Lower and higher calibration sizes may overcome this by benefiting from either of these two extremes (higher pseudo-label quality through many labeled instances or larger calibration sets). In case of SVHN, \ccssl-prop seems to be more sensitive to the calibration size and achieves the best results with a calibration size fraction of $0.25$. Here, too small calibration sizes clearly lead to unsatisfying results, demonstrating again the influence of the pseudo-label quality on the overall generalization performance.

\paragraph{Proportion-based Non-Conformity Sensitivity} As defined in \cref{eq:non_conformity2}, the proportion-based non-conformity measure $\alpha(\mathcal{D}, (\vec{x}, y))$ involves the parameter $\gamma \geq 0$, which represents the sensitivity towards the influence of the prediction $\hatp_\mathcal{D}(\vec{x})(y)$ on the scoring for a given tuple $(\vec{x}, y)$.

\begin{table}[htbp]
    \caption{Averaged test accuracies and their standard deviations over $3$ random seeds for various $\gamma$ values used in \ccssl-prop. For each dataset, we considered $250$ labeled examples to be given.}
    \centering
    \begin{tabular}{lcc}
      \toprule
       $\gamma$ & CIFAR-10    & SVHN\\
      \midrule 
      $0.01$ & \textbf{93.64} {\small $\pm${0.33}} & 95.60 {\small $\pm${2.17}} \\
      $0.1$ & 93.24 {\small $\pm${1.16}} & 95.91 {\small $\pm${1.76}} \\
      $0.5$ & 93.19 {\small $\pm${1.02}} & \textbf{96.82} {\small $\pm${0.10}} \\
      $1$ & 93.01 {\small $\pm${1.23}} & 96.61 {\small $\pm${0.08}} \\
      $10$ & 92.29 {\small $\pm${1.94}} & 95.41 {\small $\pm${2.07}} \\
    \bottomrule
\label{table:abl:sensitivity}
\end{tabular}
\end{table}

In \cref{table:abl:sensitivity}, we report results of \ccssl-prop for $\gamma \in \{0.01, 0.1, 0.5, 1, 10\}$. While smaller $\gamma$ values lead to better results for CIFAR-10, SVHN benefits from slightly higher $\gamma$ values. These results show that this parameter indeed has an effect on the overall results, i.e., it is reasonable to consider it as a hyperparameter.

\subsection{Knowledge Graph Embedding Experiments}
We were interested in evaluating conformal credal self-supervised learning in the link prediction problem on knowledge graphs. Knowledge graphs represent structured collections of facts~\citep{hogan2020knowledge}, which are stored in graphs connecting entities via relations.
These collections of facts have been used in a wide range of applications, including web search, question answering, and recommender systems~\citep{nickel2015review}. 
The task of identifying missing links in knowledge graphs is referred to as~\textit{link prediction}. 
Knowledge graph embedding (KGE) models have been particularly successful at tackling the link prediction task, among many others~\citep{nickel2015review}. In semi-supervised link prediction, only a fraction of facts is given. The task is then to ``enrich'' the input graph to detect relations that connect entities, which are subsequently also used in the learning of graph embeddings.

\paragraph{Experimental Setting} 
In our experiments, we follow a standard training and evaluation setup commonly used in the KGE domain~\citep{ruffinelli2020you,zhang2019quaternion}. 
We consider three multiplicative interaction-based KGE embedding models: DistMult~\citep{yang2014embedding}, ComplEx~\citep{trouillon2016complex}, and QMult~\citep{pmlr-v157-demir21a}. 
To train the models, we model the problem as a 1vsAll classification (see \citep{https://doi.org/10.48550/arxiv.2205.06560} and \citep{ruffinelli2020you} for more details about the 1vsAll training regime). Here, we compare conventional pseudo-labeling as described in \citep{lee2013pseudo} to \ccssl-diff. In the following, we refer to the former as PL. We do not employ consistency regularization or any other confirmation bias mitigation technique, demonstrating the flexibility of our framework. 
In our experiments, we used the two benchmark datasets UMLS and KINSHIP~\citep{dettmers2018convolutional}, whose characteristics are provided in \Cref{table:kge_dataset}. 

\begin{table}[htbp]
    \caption{Overview of the datasets considered in the KGE experiments.}
    \centering
    \begin{tabular}{l c c c c c}
    \toprule
     & \# Entities &  \# Relations & $|\mathcal{D}_{\text{train}}|$ & $|\mathcal{D}_{\text{val}}|$ &  $|\mathcal{D}_{\text{test}}|$\\
    \midrule
    UMLS             &  136    &     93  &    10,432  &  1,304 &  1,965 \\
    KINSHIP          &  105    &     51  &    17,088  &  2,136 &  3,210 \\
    \bottomrule
    \end{tabular}
    \label{table:kge_dataset}
\end{table}

For each model (DistMult, ComplEx, and QMult), we applied a grid-search over the learning rates $\{0.01, 0.1, 0.001 \}$, batch sizes $\{512, 1024\}$ and the number of epochs $\{5, 50\}$ to tune the parameters on a separate validation set.
For \ccssl-diff, we initially divided $\mathcal{D}_{\text{train}}$ into training, calibration and unlabeled splits with 40:40:20 ratios, respectively.
For the conventional pseudo-labeling procedure, we divided $\mathcal{D}_{\text{train}}$ into training, and unlabeled splits with 40:60 ratios, respectively.
To ensure that each model is trained with the exact training split, we select the first 40\% of all triples as training split. 

As opposed to domains like image classification, where a small number of classes, for which labeled instances are provided, allows for a certain degree of interpolation, knowledge graphs involve a typically larger vocabulary of entities. By subselecting data from the knowledge graph, there is a much higher chance to miss some entities. Not observing parts of the vocabulary renders the task of learning their embeddings effectively as an unsupervised learning problem. This is why our considered training splits are relatively large. Arguably, \ccssl{} gets an unfair advantage here by providing (labeled) calibration data beyond the labeled training data. However, this data is excluded from being used as pseudo-labeled training data either, and can not contribute to the learned embeddings directly. It leaves large parts of the knowledge graph unconnected as it prevents to enrich that part of the graph by pseudo-labels, having an influence on the generalizability of the learned KGE model.

\paragraph{Link Prediction Results}
Table~\ref{table:umls_kinship_1vsall} reports the link prediction performance on UMLS and KINSHIP. 
Overall, the results suggest that incorporating \ccssl~in knowledge graph embedding model leads to better generalization performance in 16 out of 18 metrics on two benchmark datasets. Also, the credal pseudo-label construction is effective as it improves the results over the PL baseline.
To address our discussions about fairness in the split modeling, we conduct two more experiments with 40:10:50 and 30:20:50 split ratios to quantify the impact of the calibration signal in the link prediction task, where the PL baselines observes splits with ratios 40:60 and 50:50, respectively.

\begin{table}[htbp]
    \caption{Link prediction results on UMLS and KINSHIP. 
    The 40:40:20 split ratio for \ccssl-diff and a 40:60 for conventional pseudo-labeling (PL). Bold entries denote best results per method, dataset and metric.}
    \centering
    \begin{tabular}{l  c c c c c c c c c c c c c c c}
      \toprule
                           &\multicolumn{3}{c}{\textbf{UMLS}}    & \multicolumn{3}{c}{\textbf{KINSHIP}}\\
                        \cmidrule(lr){2-4}  \cmidrule(lr){5-7}                       
                            & MRR         &H@1           &H@3             &MRR         &H@1           &H@3         \\
      \midrule
       DistMult-PL          &0.229         &0.128          &0.243            &0.262        &0.160         &0.284    \\
       DistMult-\ccssl   &\textbf{0.246}&\textbf{0.161} &\textbf{0.265}   &\textbf{0.274}&\textbf{0.170}&\textbf{0.299}    \\
      \midrule
      ComplEx-PL            &\textbf{0.282}&0.160          &\textbf{0.358}  &0.333         &0.226         &0.382 \\
      ComplEx-\ccssl      &0.253         &\textbf{0.191} &0.261           &\textbf{0.344}&\textbf{0.255}&\textbf{0.381}     \\
     \midrule
      QMult-PL              &0.269         &0.157          &\textbf{0.309}  &0.323         &0.228         &0.351   \\
      QMult-\ccssl        &\textbf{0.294}&\textbf{0.217} &\textbf{0.309}  &\textbf{0.328}&\textbf{0.238}&\textbf{0.363}\\     
\bottomrule
\label{table:umls_kinship_1vsall}
\end{tabular}
\end{table}
Table~\ref{table:umls_kinship_1vsall_40_10_split} reports the link prediction performances for the 40:10:50 (\ccssl) and 40:60 (PL) split ratio. Interestingly, the results do not vary much (only in 3 out of 18 cases). This confirms that the labeled data split has critical influence on the overall results in KGE link prediction, whereas the contribution of the self-supervised part is limited. As said before, semi-supervised learning in knowledge graph embedding is much more depending on the training data compared to image classification.

\begin{table}[htbp]
    \caption{
    Link prediction results on UMLS and KINSHIP. 
    The 40:10:50 split ratio for \ccssl-diff and a 40:60 for pseudo-labelling (PL). 
    Bold entries denote best results per method, dataset and metric.}
    \centering
    \begin{tabular}{l  c c c c c c c c c c c c c c c}
      \toprule
                           &\multicolumn{3}{c}{\textbf{UMLS}}    & \multicolumn{3}{c}{\textbf{KINSHIP}}\\
                        \cmidrule(lr){2-4}  \cmidrule(lr){5-7}                       
                            & MRR         &H@1           &H@3             &MRR         &H@1           &H@3         \\
      \midrule
       DistMult-PL          &0.229         &0.128          &0.243            &0.262        &0.160         &0.284    \\
       DistMult-\ccssl    &\textbf{0.246}&\textbf{0.161} &\textbf{0.265}   &\textbf{0.275}&\textbf{0.170}&\textbf{0.299}    \\
      \midrule
      ComplEx-PL            &\textbf{0.282}&0.160          &\textbf{0.358}  &0.333         &0.226         &0.382 \\
      ComplEx-\ccssl      &0.253         &\textbf{0.191} &0.261           &\textbf{0.335}&\textbf{0.232}&\textbf{0.378}     \\
     \midrule
      QMult-PL              &0.269         &0.157          &\textbf{0.309}  &0.323         &0.228         &.351   \\
      QMult-\ccssl        &\textbf{0.294}&\textbf{0.217} &\textbf{0.309}  &\textbf{0.328}&\textbf{0.238}&\textbf{0.363}\\     
\bottomrule
\label{table:umls_kinship_1vsall_40_10_split}
\end{tabular}
\end{table}
\begin{table}[htbp]
    \caption{
    Link prediction results on UMLS and KINSHIP. 
    The 30:20:50 split ratio for \ccssl-diff and a 50:50 for pseudo-labelling (PL). 
    Bold entries denote best results per method, dataset and metric.}
    \centering
    \begin{tabular}{l  c c c c c c c c c c c c c c c}
      \toprule
                           &\multicolumn{3}{c}{\textbf{UMLS}}    & \multicolumn{3}{c}{\textbf{KINSHIP}}\\
                        \cmidrule(lr){2-4}  \cmidrule(lr){5-7}                       
                            & MRR         &H@1          &H@3            &MRR         &H@1           &H@3         \\
      \midrule
       DistMult-PL          &\textbf{0.263}&\textbf{0.164}&\textbf{0.268}  &\textbf{0.302}&\textbf{0.198}&\textbf{0.328}    \\
       DistMult-\ccssl    &0.240         &0.161         &0.253           &0.245         &0.140         &0.267    \\
      \midrule
      ComplEx-PL            &\textbf{0.308}&\textbf{0.205}&\textbf{0.351}  &\textbf{0.392}&\textbf{0.302}&\textbf{0.431}\\
      ComplEx-\ccssl      &0.228         &0.159         &0.241           &0.255         &0.148         &0.285    \\
     \midrule
      QMult-PL              &\textbf{0.330}&\textbf{0.223}&\textbf{0.362}  &\textbf{0.381}&\textbf{0.292}&\textbf{0.422}\\
      QMult-\ccssl       &0.224         &0.150         &0.225           &0.221         &0.118         &0.242    \\
\bottomrule
\label{table:umls_kinship_1vsall_30_20_split}
\end{tabular}
\end{table}

Motivated by the findings in the results shown in \Cref{table:umls_kinship_1vsall,table:umls_kinship_1vsall_40_10_split}, we reduced the training set for \ccssl{} by $25\%$ and increased the training set for pseudo-labeling by $25\%$. 
This setting leads to better generalization performance for PL in all metrics, again giving further evidence for our reasoning about the splits. This case clearly demonstrates a limitation of our method: \ccssl{} is especially favorable in settings where a sufficient amount of labeled instances are provided, particularly when facing structural data such as knowledge graphs.

\section{Generalized Credal Learning: Theoretical Results}

In this section, we provide theoretical results on generalized credal learning as introduced in \cref{sec:ccssl:mit_cb} of the main paper.

\subsection{Proof of Theorem 1}

\begin{figure}[htbp]
    \centering
    \begin{minipage}{.49\textwidth}
        \centering
        \includegraphics[width=\linewidth]{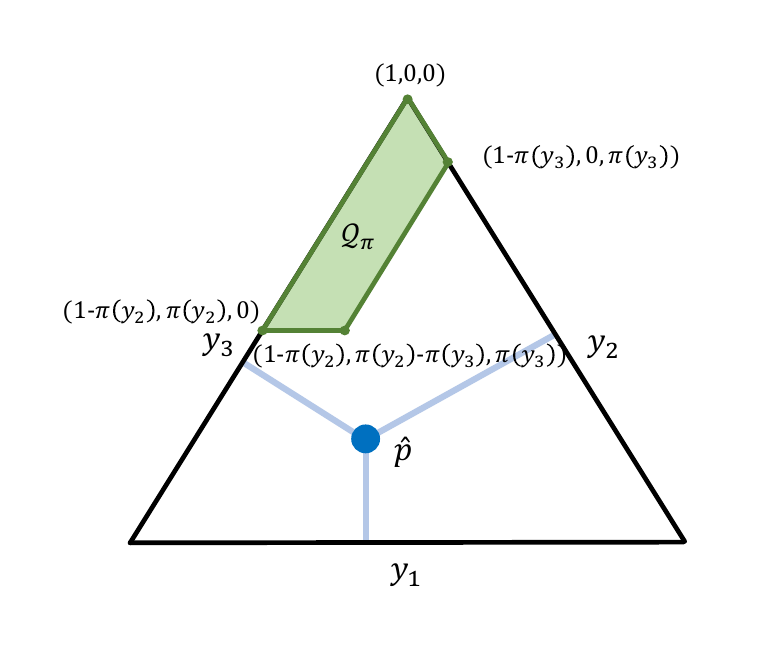}
    \end{minipage}
    \begin{minipage}{.49\textwidth}
        \centering
        \includegraphics[width=\linewidth]{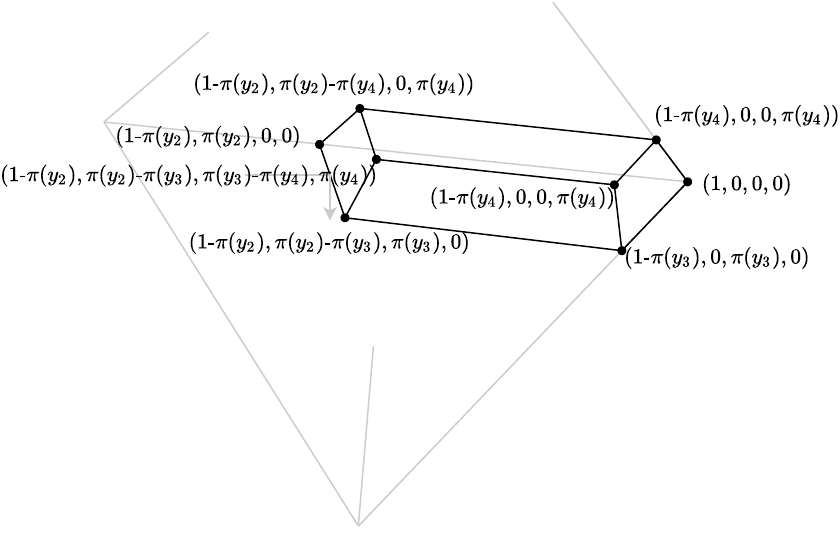}
    \end{minipage}
    \caption{Schematic illustration of a credal set $\mathcal{Q}_\pi$ as a convex polytope in a barycentric coordinate space of all distributions $\mathbb{P}(\mathcal{Y})$ for three and four classes. $\mathcal{Q}_\pi$ is induced by a normalized possibility distribution $\pi$ with $1 = \pi(y_1) \geq \ldots \geq \pi(y_K) \geq 0$.}
    \label{fig:credal_set}
\end{figure}

In the following, we consider possibility distributions $\pi : \mathcal{Y} \fromto [0,1]$, where $\pi_i := \pi(y_i)$ abbreviates the possibility of class $y_i$. Without loss of generality, we assume the possibilities be ordered and normalized, i.e., $0 \leq \pi_1 \leq \ldots \leq \pi_K = 1$ for $K$ classes. Furthermore, we also denote the respective probability of a class $y_i$ given a distribution $p \in \mathbb{P}(\mathcal{Y})$ by $p_i := p(y_i)$.

As described in \cref{sec:ccssl:mit_cb}, the set of inequalities that defines the boundary of a credal set $\mathcal{Q}_\pi$ induces a convex polytope. In \cref{fig:credal_set}, such a credal set is illustrated for three and four classes in a barycentric visualization. The extreme points are marked with the respective probabilities.

In \cref{alg:gen_credal_learning}, we provide an algorithm to solve the problem of finding the closest point in the convex polytope $\mathcal{Q}_\pi$ to a query distribution $\hatp$. In order to proof \cref{theorem:optimality} that states the optimality of this approach, we will introduce the following three lemmas:
\begin{enumerate}
    \item Termination
    \item Optimal projection
    \item Optimal face
\end{enumerate}

\begin{lemma}[Termination]
    Given a normalized possibility distribution $\pi : \mathcal{Y} \fromto [0,1]$, Algorithm 1 terminates for an arbitrary probability distribution $\hatp \in \mathbb{P}(\mathcal{Y})$.
\end{lemma}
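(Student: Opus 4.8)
The goal is to show that the \texttt{while}-loop in \Cref{alg:gen_credal_learning} executes only finitely many times and that each iteration is well-defined (i.e., a valid $y^*$ with the stated non-violation property always exists whenever $Y \neq \emptyset$). Since $|Y|$ strictly decreases on each iteration as long as the set $Y'$ selected is nonempty, the plan is to argue that $Y'$ is always nonempty: because $Y' = \{ y \in Y \mid \pi(y) \leq \pi(y^*) \}$ and $y^*$ is chosen \emph{in} $Y$, we trivially have $y^* \in Y'$, so $|Y'| \geq 1$ and hence $|Y|$ drops by at least one. As $Y$ is initialized to the finite set $\mathcal{Y}$ with $|\mathcal{Y}| = K$, the loop runs at most $K$ times. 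This gives termination \emph{provided} the inner selection step never fails.

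The substantive part is therefore to show the inner step is always feasible: that among the elements of the current $Y$ with maximal possibility, at least one can be chosen as $y^*$ so that the induced probabilities $\bar p(y)$ for $y \in Y'$ do not violate the constraints $\sum_{k=1}^i p_k \leq \pi_i$ from \Cref{eq:possibility_constraints}. I would argue this by taking $y^*$ to be the element of $Y$ with the largest possibility (the ``topmost unassigned face''); the mass budget $\pi(y^*) - \sum_{y' \notin Y} p^r(y')$ being distributed is exactly the slack left in the constraint $\sum_{k : \pi_k \le \pi(y^*)} p_k \le \pi(y^*)$ after the already-assigned classes, and distributing it proportionally to $\hatp$ among the remaining classes with $\pi(y) \le \pi(y^*)$ keeps every partial sum $\sum_{k=1}^i \bar p_k$ monotone and bounded by the corresponding $\pi_i$. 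If for the maximal choice some intermediate constraint $\pi_i$ with $\pi_i < \pi(y^*)$ is still violated, one instead picks $y^*$ to be a class realizing that tightest violated constraint — but since $\pi$ is normalized and $\hatp$ is a probability distribution, the loop invariant ``the already-assigned $p^r$ values satisfy all constraints with strict feasibility for the remaining budget'' is maintained, which guarantees such a $y^*$ exists. Establishing this loop invariant carefully is the crux.

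I expect the main obstacle to be pinning down precisely the loop invariant that makes the feasibility of the selection step evident, and verifying it is preserved across iterations — in particular, that after removing $Y'$ and subtracting its assigned mass, the residual problem on $Y \setminus Y'$ is again a well-posed closest-point problem on a nonempty credal polytope with a normalized (restricted) possibility vector. Once the invariant is stated correctly, preservation should follow from elementary manipulation of the partial-sum inequalities in \Cref{eq:possibility_constraints} together with the fact that $\hatp(y) \ge 0$ and $\sum_y \hatp(y) = 1$. The remaining bookkeeping — that $\sum_{y' \in Y'} \hatp(y')$ in the denominator is handled (and the degenerate case where it is zero is absorbed, e.g., by the initial check $\hatp \in \mathcal{Q}_\pi$ or by distributing uniformly) — is routine and I would only flag it rather than grind through it.
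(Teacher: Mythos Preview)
Your termination skeleton matches the paper's: once the inner selection step is known to be feasible, $y^* \in Y'$ forces $|Y'| \ge 1$, so $|Y|$ strictly decreases and the loop runs at most $K$ times. Where you diverge is in how you establish feasibility of the selection step. You propose to start from the top (the element of $Y$ with largest possibility), fall back to the tightest violated constraint if needed, and carry a loop invariant asserting that the residual problem is again a well-posed closest-point problem on a nonempty credal polytope.

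The paper takes a much shorter route: rather than arguing from the top down, it simply exhibits a single $y^*$ that always works --- namely $\bar y \in Y$ with \emph{smallest} possibility $\pi(\bar y)$. For this choice $Y' = \{\bar y\}$, and the induced $\bar p(\bar y) = \pi(\bar y) - \sum_{y' \notin Y} p^r(y') \le \pi(\bar y)$ satisfies the only relevant partial-sum constraint from \Cref{eq:possibility_constraints} at $\bar y$, so the non-violation condition holds trivially. This one-line witness replaces your entire loop-invariant machinery. Your approach is not wrong, and the invariant you sketch (residual feasibility on $Y \setminus Y'$) is morally what underlies the later optimality lemmas, but for mere termination it is more than is needed; exhibiting the minimal-$\pi$ element of $Y$ as a guaranteed valid choice is the cleaner move here.
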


\begin{proof}
    In case of $\hatp \in \mathcal{Q}_\pi$, we immediately return with a result.
    For $\hatp \not\in \mathcal{Q}_\pi$, we fix the probabilities of all $y \in Y'$ in each iteration of Algorithm\ 1, which are then removed from $Y$. Thereby, at least for the class $\bar{y} \in Y$ with smallest possibility $\pi(\bar{y})$ 
    \begin{equation*}
        \bar{p}(\bar{y}) = \left(\pi(\bar{y}) - \sum_{y' \not \in Y} p^r(y') \right) \cdot \frac{\hatp(\bar{y})}{\hatp(\bar{y})} \leq \pi(\bar{y})
    \end{equation*}
    holds, which does not violate the possibility constraints as in \cref{eq:possibility_constraints}. Here, the set of classes whose probabilities $p^r$ are set is given by $Y' = \{\bar{y}\}$. Consequently, at least one element in $Y$ is removed per step, which eventually results in an empty set $Y$ and the termination of \cref{alg:gen_credal_learning}.
\end{proof}

In the next lemma, we characterize the optimality of a projection according to the distribution $p^r \in \mathbb{P}(\mathcal{Y})$ as determined in \cref{alg:gen_credal_learning}. In this course, $\bar{Y} \subseteq \mathcal{Y}$ denotes the set of arbitrary, but already fixed probabilities $p^r(y)$ for $y \in \bar{Y}$. This set shall represent classes with optimal probability scores determined in previous iterations. Without loss of generality, we assume that $\forall y \in \bar{Y} : \pi(y) \leq \min_{y' \in \mathcal{Y} \setminus \bar{Y}} \pi(y')$. Moreover, for a certain possibility degree $\pi_i$, let us define the set of classes with at most $\pi_i$ possibility as 
\begin{equation}
    Y_{\pi_i} := \{y \in \mathcal{Y} \given \pi(y) \leq \pi_i\} \enspace .
\end{equation}
For the remaining classes in $\mathcal{Y} \setminus \bar{Y}$, let $p^r$ be constructed as follows:
\begin{equation}
\label{eq:p_r}
    p^r(y) = \begin{cases}
               \left(\pi_i - \sum_{y' \in \bar{Y}} p^r(y') \right) \cdot \frac{\hatp(y)}{\sum_{y' \in Y_{\pi_i} \setminus \bar{Y}} \hatp(y')} & \text{if } y \in Y_{\pi_i} \setminus \bar{Y} \\
                \left(1 - \pi_i \right) \cdot \frac{\hatp(y)}{\sum_{y' \in \mathcal{Y} \setminus Y_{\pi_i}} \hatp(y')} & \text{if } y \in \mathcal{Y} \setminus Y_{\pi_i}
                \end{cases} \, .
\end{equation}

Moreover, we use the notion of a \textit{half space} associated with a possibility constraint $\pi_i$, which is defined as follows.

\begin{definition}[Half-space]\label{def:half_space}
  For a possibility constraint $\pi_i$ of a (normalized) possibility distribution $\pi : \mathcal{Y} \fromto [0,1]$ with $\max_{y \in \mathcal{Y}} \pi(y) = 1$, we define
  \begin{equation*}
      \text{half-space}_{\pi_i} := \bigg\{ p \in \mathbb{P}(\mathcal{Y}) \given \sum_{y \in \mathcal{Y} : \pi(y) \leq \pi_i} p(y) = \pi_i\bigg\}
  \end{equation*}
  as half-space associated with $\pi_i$.
\end{definition}

Given distributions $p^r$ of the form (\ref{eq:p_r}) for a possibility constraint $\pi_i$, which is by construction element of $\text{half-space}_{\pi}$ (due to $\forall y \in \bar{Y} : \pi(y) \leq \pi_i$), we can make the following statement about its optimality.

\begin{lemma}[Optimal projection]\label{lemma:optimal_projection}
    Given a set $\bar{Y} \subseteq \mathcal{Y}$ of classes with arbitrarily fixed probabilities, a (normalized) possibility distribution $\pi : \mathcal{Y} \fromto [0,1]$ with $\max_{y \in \mathcal{Y}} \pi(y) = 1$ and the set $\text{half-space}_{\pi_i}$, the projection $p^r(y) \in \text{half-space}_{\pi_i}$ as defined before is optimal in the sense that $\nexists p \in \text{half-space}_{\pi_i}$ with $p(y) = p^r(y) \, \forall y \in \bar{Y}$ for which $\exists y \in Y_{\pi_i} \setminus \bar{Y}: p(y) \neq p^r(y)$  such that $\kldiv{p}{\hat{p}} < \kldiv{p^r}{\hat{p}}$.
\end{lemma}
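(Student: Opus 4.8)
The plan is to set up the minimization of $\kldiv{p}{\hat p}$ over the affine slice $\text{half-space}_{\pi_i}$ subject to the constraint that the coordinates in $\bar Y$ are pinned at $p^r$, and then show that $p^r$ restricted to $Y_{\pi_i}\setminus\bar Y$ is the unique minimizer. First I would observe that, once the coordinates on $\bar Y$ are fixed and we only compare distributions $p$ with $p(y)=p^r(y)$ for all $y\in\bar Y$, the KL objective decomposes additively: $\kldiv{p}{\hat p} = \sum_{y\in\bar Y} p^r(y)\log\frac{p^r(y)}{\hat p(y)} + \sum_{y\notin\bar Y} p(y)\log\frac{p(y)}{\hat p(y)}$, and the first sum is a constant. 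Hence minimizing $\kldiv{p}{\hat p}$ is equivalent to minimizing $\sum_{y\notin\bar Y} p(y)\log\frac{p(y)}{\hat p(y)}$. Moreover the constraint $p\in\text{half-space}_{\pi_i}$ together with $\sum_y p(y)=1$ forces $\sum_{y\in Y_{\pi_i}\setminus\bar Y} p(y) = \pi_i - \sum_{y\in\bar Y} p^r(y) =: s$ and $\sum_{y\in\mathcal Y\setminus Y_{\pi_i}} p(y) = 1-\pi_i$ — two separate linear budget constraints on two disjoint blocks of coordinates. So the problem splits into two independent subproblems of the form: minimize $\sum_{y\in B} p(y)\log\frac{p(y)}{\hat p(y)}$ subject to $\sum_{y\in B} p(y)=c$, $p(y)\ge 0$.

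Next I would solve each subproblem by a standard Lagrange / strict-convexity argument. The function $p\mapsto\sum_{y\in B}p(y)\log\frac{p(y)}{\hat p(y)}$ is strictly convex on the simplex-like set $\{p\ge 0:\sum_{y\in B}p(y)=c\}$, so it has a unique minimizer, characterized by $\log\frac{p(y)}{\hat p(y)} + 1 = \lambda$ for all $y\in B$, i.e. $p(y) \propto \hat p(y)$ on $B$; normalizing to the budget $c$ gives exactly $p(y) = c\cdot\frac{\hat p(y)}{\sum_{y'\in B}\hat p(y')}$. Plugging $B = Y_{\pi_i}\setminus\bar Y$ with $c = s = \pi_i-\sum_{y'\in\bar Y}p^r(y')$ recovers the first branch of \eqref{eq:p_r}, and $B = \mathcal Y\setminus Y_{\pi_i}$ with $c = 1-\pi_i$ recovers the second branch. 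Since the minimizer is unique and $p^r$ attains it, no competing $p\in\text{half-space}_{\pi_i}$ agreeing with $p^r$ on $\bar Y$ can achieve a strictly smaller divergence while differing from $p^r$ on $Y_{\pi_i}\setminus\bar Y$ — which is exactly the claim. (A minor point: one should note $\hat p\notin\mathcal Q_\pi$ so $\hat p\notin\text{half-space}_{\pi_i}$ is not guaranteed to hold, but the argument only needs $\hat p(y)>0$ on the relevant blocks, or the usual convention handling of zeros, which I would address with one remark.)

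The main obstacle I anticipate is not the optimization itself — that is the routine Lagrange computation above — but being careful about the scope of the claim: the lemma only asserts optimality \emph{within} $\text{half-space}_{\pi_i}$ and \emph{among} $p$ agreeing with $p^r$ on $\bar Y$, it does \emph{not} claim $p^r\in\mathcal Q_\pi$ (the possibility constraints \eqref{eq:possibility_constraints} for indices with $\pi_j<\pi_i$ may be violated — that is precisely what the ``optimal face'' lemma and the while-loop in Alg.~\ref{alg:gen_credal_learning} are for). So I would state explicitly that feasibility w.r.t.\ the full polytope is deliberately not part of this lemma, keep the proof confined to the two decoupled equality-constrained subproblems, and flag the nonnegativity/zero-probability edge cases rather than belabor them. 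The whole argument is then three short moves: additive decomposition of KL off the fixed block, separation into two independent budgeted subproblems, and uniqueness of the information-projection minimizer in each.
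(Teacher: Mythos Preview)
Your proposal is correct. The decomposition into two independent budget-constrained blocks and the Lagrange/strict-convexity characterization of the information projection are sound, and they yield exactly the form of $p^r$ in \eqref{eq:p_r}.

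The paper takes a closely related but technically different route: rather than deriving the minimizer via first-order conditions, it fixes $p^r$, takes an arbitrary competitor $p$ in $\text{half-space}_{\pi_i}$ agreeing with $p^r$ on $\bar Y$, and bounds $\kldiv{p}{\hat p}-\kldiv{p^r}{\hat p}$ from below by applying the \emph{log sum inequality} to each of the two blocks $\mathcal{Y}\setminus Y_{\pi_i}$ and $Y_{\pi_i}\setminus\bar Y$. Because the block totals are pinned to $1-\pi_i$ and $\pi_i-A$ respectively, the lower bound collapses to zero, giving the contradiction. Your Lagrange argument and the paper's log-sum argument are two faces of the same coin (the equality case of log sum is precisely the proportionality condition your KKT equations produce); your version is a bit more constructive and makes the uniqueness explicit via strict convexity, while the paper's version avoids any calculus and stays purely inequality-based. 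Either is perfectly adequate here, and your remark about the scope of the lemma (optimality only within the half-space, not feasibility in $\mathcal{Q}_\pi$) is exactly right and worth keeping.
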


\begin{proof}
    Let us define $A := \sum_{y \in \bar{Y}} p^r(y)$ as the sum of the (previously) fixed probabilities. From the definition of $p^r$, we know:
    
    \begin{align}
    \label{eq:simplification}
\begin{split}
        \kldiv{p^r}{\hat{p}} &= \sum_{y \in \mathcal{Y} \setminus Y_{\pi_i}} \frac{(1- \pi_i) \hat{p}(y)}{\sum_{y' \in \mathcal{Y} \setminus Y_{\pi_i}} \hat{p}(y')} \log \frac{\frac{(1- \pi_i) \hat{p}(y)}{\sum_{y' \in \mathcal{Y} \setminus Y_{\pi_i}} \hat{p}(y')}}{\hatp(y)} \\
        & \quad + \sum_{y \in Y_{\pi_i} \setminus \bar{Y}} \frac{(\pi_i - A) \hatp(y)}{\sum_{y' \in Y_{\pi_i} \setminus \bar{Y}} \hatp(y')} \log \frac{\frac{(\pi_i - A) \hatp(y)}{\sum_{y' \in Y_{\pi_i} \setminus \bar{Y}} \hatp(y')}}{\hatp(y)} \\
        & \quad + \sum_{y \in \bar{Y}} p^r(y) \log \frac{p^r(y)}{\hatp(y)} \\
        &= \frac{(1- \pi_i)}{\sum_{y' \in \mathcal{Y} \setminus Y_{\pi_i}} \hat{p}(y')} \left( \sum_{y \in \mathcal{Y} \setminus Y_{\pi_i}} \hat{p}(y) \right) \log \frac{(1- \pi_i)}{\sum_{y' \in \mathcal{Y} \setminus Y_{\pi_i}} \hat{p}(y')} \\
        & \quad + \frac{(\pi_i - A)}{\sum_{y' \in Y_{\pi_i} \setminus \bar{Y}} \hatp(y')} \left( \sum_{y \in Y_{\pi_i} \setminus \bar{Y}} \hatp(y) \right) \log \frac{(\pi_i - A)}{\sum_{y' \in Y_{\pi_i} \setminus \bar{Y}} \hatp(y')} \\
        & \quad + \sum_{y \in \bar{Y}} p^r(y) \log \frac{p^r(y)}{\hatp(y)} \\
        &= (1- \pi_i) \log \frac{(1- \pi_i)}{\sum_{y' \in \mathcal{Y} \setminus Y_{\pi_i}} \hat{p}(y')} + (\pi_i - A) \log \frac{(\pi_i - A)}{\sum_{y' \in Y_{\pi_i} \setminus \bar{Y}} \hatp(y')} \\
        & \quad + \sum_{y \in \bar{Y}} p^r(y) \log \frac{p^r(y)}{\hatp(y)} \\
        \end{split}
    \end{align}
    
    Now, suppose $\exists p \in \text{half-space}_{\pi_i}$ with $p(y) = p^r(y) \, \forall y \in \bar{Y}$ for which $\exists y \in Y_{\pi_i} \setminus \bar{Y}: p(y) \neq p^r_{\pi_i}(y)$ such that $\kldiv{p}{\hat{p}} < \kldiv{p^r}{\hat{p}}$, which would lead to a contradiction of the lemma.
    
    \begin{align*}
        & \quad \kldiv{p}{\hatp} - \kldiv{p^r}{\hatp} \\
        &= \sum_{y \in \mathcal{Y} \setminus Y_{\pi_i}} p(y) \log \frac{p(y)}{\hatp(y)} + \sum_{y \in Y_{\pi_i} \setminus \bar{Y}} p(y) \log \frac{p(y)}{\hatp(y)} + \sum_{y \in \bar{Y}} p(y) \log \frac{p(y)}{\hatp(y)} \\
        & \quad - \sum_{y \in \mathcal{Y} \setminus Y_{\pi_i}} p^r(y) \log \frac{p^r(y)}{\hatp(y)} - \sum_{y \in Y_{\pi_i} \setminus \bar{Y}} p^r(y) \log \frac{p^r(y)}{\hatp(y)} - \sum_{y \in \bar{Y}} p^r(y) \log \frac{p^r(y)}{\hatp(y)} \\
        &= \sum_{y \in \mathcal{Y} \setminus Y_{\pi_i}} p(y) \log \frac{p(y)}{\hatp(y)} + \sum_{y \in Y_{\pi_i} \setminus \bar{Y}} p(y) \log \frac{p(y)}{\hatp(y)} \\
        & \quad - \sum_{y \in \mathcal{Y} \setminus Y_{\pi_i}} p^r(y) \log \frac{p^r(y)}{\hatp(y)} - \sum_{y \in Y_{\pi_i} \setminus \bar{Y}} p^r(y) \log \frac{p^r(y)}{\hatp(y)}\\
        & \geq \left( \sum_{y \in \mathcal{Y} \setminus Y_{\pi_i}} p(y) \right) \log \frac{\left( \sum_{y \in \mathcal{Y} \setminus Y_{\pi_i}} p(y) \right)}{\left( \sum_{y \in \mathcal{Y} \setminus Y_{\pi_i}} \hatp(y) \right)} + \left( \sum_{y \in Y_{\pi_i} \setminus \bar{Y}} p(y) \right) \log \frac{\left( \sum_{y \in Y_{\pi_i} \setminus \bar{Y}} p(y) \right)}{\left( \sum_{y \in Y_{\pi_i} \setminus \bar{Y}} \hatp(y) \right)} \\
        & \quad - \sum_{y \in \mathcal{Y} \setminus Y_{\pi_i}} p^r(y) \log \frac{p^r(y)}{\hatp(y)} - \sum_{y \in Y_{\pi_i} \setminus \bar{Y}} p^r(y) \log \frac{p^r(y)}{\hatp(y)}\\
    \end{align*}
    
    Here, the last equation can be derived from the log sum inequality.\footnote{For $a_1, \dots, a_n, b_1, \dots, b_n \in \mathbb{R}_+$, the log sum inequality states that $\sum_{i\in[n]} a_i \log \frac{a_i}{b_i} \geq a \log \frac{a}{b}$, whereby $a:= \sum_{i\in[n]} a_i$ and $b:=\sum_{i\in[n]} b_i$.} As we are projecting onto the half space associated with $\pi_i$, we can further see that $\sum_{y \in \mathcal{Y} \setminus Y_{\pi_i}} p(y) = 1 - \pi_i$ and $\sum_{y \in Y_{\pi_i} \setminus \bar{Y}} p(y) = \pi_i - A$, which also applies to $p^r$ for the same class subsets.
    
    As a result, together with (\ref{eq:simplification}), we get
    
    \begin{align*}
        &= \left( 1 - \pi_i \right) \log \frac{\left( 1 - \pi_i \right)}{\left( \sum_{y \in \mathcal{Y} \setminus Y_{\pi_i}} \hatp(y) \right)} + \left( \pi_i - A \right) \log \frac{\left( \pi_i- A \right)}{\left( \sum_{y \in Y_{\pi_i} \setminus \bar{Y}} \hatp(y) \right)} \\
        & \quad - \left( 1 - \pi_i \right) \log \frac{\left( 1 - \pi_i \right)}{\left( \sum_{y \in \mathcal{Y} \setminus Y_{\pi_i}} \hatp(y) \right)} - \left( \pi_i - A \right) \log \frac{\left( \pi_i- A \right)}{\left( \sum_{y \in Y_{\pi_i} \setminus \bar{Y}} \hatp(y) \right)} = 0 \nless 0 \enspace ,
    \end{align*}
    
    which leads to a contradiction.
\end{proof}

For the next lemma, we introduce the notion of a \textit{face} as follows:

\begin{definition}[Face]\label{def:face}
  For a possibility constraint $\pi_i$ of a (normalized) possibility distribution $\pi : \mathcal{Y} \fromto [0,1]$ and distributions $p \in \mathbb{P}(\mathcal{Y})$ with $\sum_{y \in \mathcal{Y} : \pi(y) \leq \pi_i} p(y) = \pi_i$ (i.e., $p \in \text{half-space}_{\pi_i}$), we define
  \begin{equation*}
      \text{face}_{\pi_i} := \bigg\{ p \given \sum_{k=1}^j p_k \leq \pi_j , \quad j \in \{1, ..., i\} \bigg\}
  \end{equation*}
  as face associated with $\pi_i$.
\end{definition}

Effectively, $\text{face}_{\pi_i}$ considers the subspace of $\text{half-space}_{\pi_i}$ that does not violate any of the possibility constraints $\pi_j \leq \pi_i$. One can readily see that each iteration of Algorithm\ 1 determines the highest possibility constraint $\pi(y^*)$ whose projection $p^r$ as defined before is element of $\text{face}_{\pi(y^*)}$. Given this fact, we show the optimality of this selected face in the next lemma.

\begin{lemma}[Optimal face]
    Given a set $\bar{Y} \subseteq \mathcal{Y}$ of classes with arbitrarily fixed probabilities, a (normalized) possibility distribution $\pi : \mathcal{Y} \fromto [0,1]$ with $\max_{y \in \mathcal{Y}} \pi(y) = 1$ and an arbitrary distribution $\phat \notin \mathcal{Q}_\pi$, Algorithm\ 1 selects  $\text{face}_{\pi_i}$ in each iteration that is optimal in the sense that $\nexists j \neq i$ with $p^* \in \argmin_{p \in \text{face}_{\pi_j}} \kldiv{p}{\hatp}$, $p^*(y) = p^r(y) \, \forall y \in \bar{Y}$ and $\exists y \in Y_{\pi_i} \setminus \bar{Y} : p^*(y) \neq p^r(y)$, such that $\kldiv{p^*}{\hatp} < \kldiv{p^r}{\hatp}$ for $p^r \in \argmin_{p \in \text{face}_{\pi_i}} \kldiv{p}{\hatp}$.
\end{lemma}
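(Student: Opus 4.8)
The plan is to compare, for every index $j \neq i$, the value $\kldiv{p^*}{\hatp}$ of the minimiser $p^*$ over $\text{face}_{\pi_j}$ with the value $\kldiv{p^r}{\hatp}$ of the minimiser over the face $\text{face}_{\pi_i}$ that \cref{alg:gen_credal_learning} selects, and to show that the former can never be strictly smaller once $p^*$ disagrees with $p^r$ somewhere in $Y_{\pi_i} \setminus \bar{Y}$. Assume w.l.o.g.\ that the classes are indexed so that $\bar{Y} = \{y_1,\dots,y_m\}$ is a prefix, put $A := \sum_{k \le m} p^r(y_k)$, and write $\hat{P}_{(a,b]} := \sum_{a < k \le b} \hatp(y_k)$.

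First I would pin down the algorithm's choice. By \cref{lemma:optimal_projection}, for any block $(m,\ell]$ the proportional assignment with $\sum_{m < k \le \ell} p(y_k) = \pi_\ell - A$ is the unique KL-minimiser among all $p \in \text{half-space}_{\pi_\ell}$ agreeing with $p^r$ on $\bar{Y}$, and a one-line calculation shows this point lies in $\text{face}_{\pi_\ell}$ iff $g(\ell) := \hat{P}_{(m,\ell]}/(\pi_\ell - A)$ is a running maximum of $g$ on $\{m{+}1,\dots,\ell\}$. Since the algorithm picks the largest feasible $\ell$, and the last running-maximum position of a finite sequence is a global maximiser, $i = \argmax_{m < \ell \le K} g(\ell)$ (largest maximiser under ties); in particular $g(i) \ge g(\ell)$ for all $\ell$, and $p^r$ restricted to $Y_{\pi_i}\setminus\bar{Y}$ equals the proportional assignment.

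The core of the argument unifies the cases $j > i$ and $j < i$ by using that $\text{face}_{\pi_j}$ is a face of the polytope $\mathcal{Q}_\pi$: \emph{every} $p^* \in \text{face}_{\pi_j}$ obeys the $i$-th constraint $\sum_{k \le i} p^*(y_k) \le \pi_i$, so with the $\bar{Y}$-part fixed, $B := \sum_{m < k \le i} p^*(y_k) \le \pi_i - A$. Applying the log-sum inequality separately on the classes of $(m,i]$ and of $(i,K]$ yields
\[
  \kldiv{p^*}{\hatp} \ \ge\ h(B), \qquad h(t) := C + t\log\frac{t}{\hat{P}_{(m,i]}} + (1-A-t)\log\frac{1-A-t}{\hat{P}_{(i,K]}},
\]
with $C := \sum_{k \le m} p^r(y_k)\log\frac{p^r(y_k)}{\hatp(y_k)}$, and $h$ is convex. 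The monotonicity I need is $h'(\pi_i - A) \le 0$, which rearranges to $g(i) \ge \hat{P}_{(i,K]}/(1-\pi_i)$; this holds because $g(K) = \frac{\hat{P}_{(m,i]} + \hat{P}_{(i,K]}}{(\pi_i - A) + (1-\pi_i)}$ is a mediant of $g(i)$ and $\hat{P}_{(i,K]}/(1-\pi_i)$ (hence lies weakly between them), while $g(i) = \max_\ell g(\ell) \ge g(K)$ forces the inequality. Convexity plus $h'(\pi_i - A)\le 0$ makes $h$ non-increasing on $[0,\pi_i - A]$, so $h(B) \ge h(\pi_i - A)$; and $h(\pi_i - A)$ is exactly the KL of the proportional completion on $(m,i]$ and $(i,K]$, which by \cref{lemma:optimal_projection} and feasibility of the algorithm's choice agrees with $p^r$ on $Y_{\pi_i}\setminus\bar{Y}$. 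Chaining the inequalities gives $\kldiv{p^*}{\hatp} \ge \kldiv{p^r}{\hatp}$, with equality forcing $B = \pi_i - A$ and $p^*$ proportional on $(m,i]$, i.e.\ $p^* = p^r$ on $Y_{\pi_i}\setminus\bar{Y}$ — so no strictly smaller value is possible.

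The step I expect to be delicate is matching the contributions of the tail classes $(i,K]$: both $p^r$ and $p^*$ are in general projections onto sub-polytopes there, not merely onto hyperplanes, so $h(\pi_i - A)$ need not equal $\kldiv{p^r}{\hatp}$ verbatim but only its value on the coordinates the lemma speaks about. The clean resolution is to carry the estimate through the optimal-value function $\ell \mapsto \min_{p \in \text{face}_{\pi_\ell}}\kldiv{p}{\hatp}$ and argue inductively: once the first block $(m,i]$ is fixed to the proportional assignment, the recursive sub-problem on $(i,K]$ contributes an identical additive term, so the convexity-plus-mediant comparison above is precisely the statement that moving the first active constraint away from index $i$ cannot lower the objective. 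Degenerate situations ($\pi_i = A$, or ties making $1 - \pi_i = 0$) are absorbed by the usual $0\log 0 = 0$ convention.
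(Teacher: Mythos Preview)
Your approach is genuinely different from the paper's: the paper proceeds by a three-way case split on whether $\pi_j$ is larger than, equal to, or smaller than $\pi_i$, repeatedly applying the log-sum inequality in each branch, whereas you extract the characterisation $i = \argmax_\ell g(\ell)$ (a nice observation not present in the paper) and try to run a single convexity-and-mediant argument. That unification is appealing, and the running-maximum argument identifying $i$ as the global maximiser of $g$ is clean.

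However, there is a real gap. Your key step is the claim that ``$\text{face}_{\pi_j}$ is a face of the polytope $\mathcal{Q}_\pi$: every $p^* \in \text{face}_{\pi_j}$ obeys the $i$-th constraint $\sum_{k\le i} p^*(y_k) \le \pi_i$''. Under the paper's \cref{def:face}, $\text{face}_{\pi_j}$ only enforces the inequalities for indices $1,\dots,j$ (together with tightness at $j$); it is \emph{not} in general contained in $\mathcal{Q}_\pi$. For $j>i$ your claim holds because constraint $i$ is among those enforced, but for $j<i$ it does not, and then $B=\sum_{m<k\le i} p^*(y_k)$ can exceed $\pi_i-A$. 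Once $B>\pi_i-A$ your monotonicity argument collapses: $h$ is convex with minimiser $t^\star \ge \pi_i-A$, so on the interval $(\pi_i-A,\, t^\star]$ you would get $h(B) < h(\pi_i-A)$, and the chain of inequalities no longer yields $\kldiv{p^*}{\hatp}\ge \kldiv{p^r}{\hatp}$. The paper's proof avoids this by treating $\pi_j<\pi_i$ separately (its Case~3), exploiting the explicit form of the minimiser $p^*$ on $\text{face}_{\pi_j}$ rather than mere membership; your ``inductive resolution'' paragraph concerns a different subtlety (matching tail contributions) and does not close this hole. To salvage your route you would need either a separate argument bounding $B$ for $j<i$ using the structure of the actual minimiser, or to lower-bound $\kldiv{p^*}{\hatp}$ via a split at $j$ rather than at $i$ and then compare the two-block values directly.
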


\begin{proof}
    Again, we define $A := \sum_{y \in \bar{Y}} p^r(y)$. Now, let us assume $\exists j \neq i$ with the properties described in this lemma, which leads us to a contradiction. To proof it, we can distinguish three cases. 
    
    \begin{enumerate}[label=Case \arabic*:]
        \item $\pi_j > \pi_i$. It is easy to see that a violation with respect to \cref{eq:possibility_constraints} for possibility $\leq \pi_j$ exists, i.e., the projection constructed as in (\ref{eq:p_r}) for $\pi_j$ is not element of $\text{face}_{\pi_j}$. In this case, the distribution $p^* \in \argmin_{\text{face}_{\pi_j}} \kldiv{p}{\hatp}$ is located on an ``edge'' of the face, i.e., $\exists m : \sum_{k=1}^m p^*_k = \pi_m$.
        
        If $m=i$, then Lemma\ \ref{lemma:optimal_projection} implies that $p^*(y) = p^r(y) \, \forall y \in Y_{\pi_i}$, which contradicts the assumptions.
        
        In case of $m \neq i$, we can derive the following results using a similar scheme as in the proof of Lemma\ \ref{lemma:optimal_projection}:
        
        \begin{align*}
            & \quad \kldiv{p^*}{\hatp} - \kldiv{p^r}{\hatp} \\
            &= \sum_{y \in \mathcal{Y} \setminus Y_{\pi_i}} p^*(y) \log \frac{p^*(y)}{\hatp(y)} + \sum_{y \in Y_{\pi_i} \setminus \bar{Y}} p^*(y) \log \frac{p(y)}{\hatp(y)} + \sum_{y \in \bar{Y}} p(y) \log \frac{p(y)}{\hatp(y)} \\
            & \quad - \sum_{y \in \mathcal{Y} \setminus Y_{\pi_i}} p^r(y) \log \frac{p^r(y)}{\hatp(y)} - \sum_{y \in Y_{\pi_i} \setminus \bar{Y}} p^r(y) \log \frac{p^r(y)}{\hatp(y)} - \sum_{y \in \bar{Y}} p^r(y) \log \frac{p^r(y)}{\hatp(y)} \\
            & \geq \left( \sum_{y \in \mathcal{Y} \setminus Y_{\pi_i}} p^*(y) \right) \log \frac{\left( \sum_{y \in \mathcal{Y} \setminus Y_{\pi_i}} p^*(y) \right)}{\left( \sum_{y \in \mathcal{Y} \setminus Y_{\pi_i}} \hatp(y) \right)} + \left( \sum_{y \in Y_{\pi_i} \setminus \bar{Y}} p^*(y) \right) \log \frac{\left( \sum_{y \in Y_{\pi_i} \setminus \bar{Y}} p^*(y) \right)}{\left( \sum_{y \in Y_{\pi_i} \setminus \bar{Y}} \hatp(y) \right)} \\
            & \quad - \sum_{y \in \mathcal{Y} \setminus Y_{\pi_i}} p^r(y) \log \frac{p^r(y)}{\hatp(y)} - \sum_{y \in Y_{\pi_i} \setminus \bar{Y}} p^r(y) \log \frac{p^r(y)}{\hatp(y)}\\
        \end{align*}
        
        As we know that $p^*$ does not violate any possibility constraints in $Y_{\pi_j} \supset Y_{\pi_i}$ due to $p^* \in \text{face}_{\pi_j}$, but is also not on the same edge as implied by $\pi_i$, it holds that 
        \begin{equation*}
            \sum_{y \in Y_{\pi_i} \setminus \bar{Y}} p^*(y) < \pi_i - A \enspace .
        \end{equation*}
        
        Moreover, as we know that there is no possibility violation by $p^r(y) \, \forall \in Y_{\pi_i}$ associated with $\pi$ (cf. Lemma\ \ref{lemma:optimal_projection}), it must hold $\sum_{Y_{\pi_i} \setminus \bar{Y}} \hatp(y) > \pi_i - A$. Otherwise, $\hatp$ would be element of $\mathcal{Q}_\pi$.
        
        Altogether, one can readily follow
        \begin{align}
        \label{eq:lemma3}
        \begin{split}
            &= \underbrace{\left( \sum_{y \in \mathcal{Y} \setminus Y_{\pi_i}} p^*(y) \right)}_{> 1 - \pi_i} \log \frac{ \overbrace{\left( \sum_{y \in \mathcal{Y} \setminus Y_{\pi_i}} p^*(y) \right)}^{> 1- \pi_i}}{\left( \sum_{y \in \mathcal{Y} \setminus Y_{\pi_i}} \hatp(y) \right)} + \underbrace{\left( \sum_{y \in Y_{\pi_i} \setminus \bar{Y}} p^*(y) \right)}_{< \pi_i - A} \log \frac{\overbrace{\left( \sum_{y \in Y_{\pi_i} \setminus \bar{Y}} p^*(y) \right)}^{< \pi_i - A}}{\left( \sum_{y \in Y_{\pi_i} \setminus \bar{Y}} \hatp(y) \right)} \\
            & \quad - (1 - \pi_i) \log \frac{(1 - \pi_i)}{\sum_{y' \in \mathcal{Y} \setminus Y_{\pi_i} }\hatp(y')} - (\pi_i - A) \log \frac{(\pi_i - A)}{\sum_{y' \in Y_{\pi_i} \setminus \bar{Y}} \hatp(y')} > 0 \nless 0 \enspace ,
        \end{split}
        \end{align}
        leading to a contradiction.
        
        \item $\pi_j = \pi_i$. Lemma\ \ref{lemma:optimal_projection} implies the optimality of $p^r$ in this case.
        \item $\pi_j < \pi_i$. Here, one can again distinguish whether there exists a violation of $p^r$ constructed for $\pi_j$ or not. In the first case, we can apply exactly the same idea as before by showing that the projection $p^* \in \argmin_{p \in \text{face}_{\pi_j}} \kldiv{p}{\hatp}$ is located on an edge that is associated with a $\pi_m$ with $m< i$.
        
        In case there is no violation, we know that $\sum_{k=1}^j p^*_k = \pi_j$. This leads to $\sum_{k=1}^i p^*_k \leq \pi_i$. Together with $\sum_{Y_{\pi_i} \setminus \bar{Y}} \hatp(y) > \pi_i - A$, one can derive a similar equation as in Lemma\ \ref{eq:lemma3} to show that 
        \begin{equation*}
            \kldiv{p^*}{\hatp} - \kldiv{p^r}{\hatp} \geq 0 \nless 0 \enspace ,
        \end{equation*}
        leading again to a contradiction.
    \end{enumerate}
\end{proof}

By combining the previous results, we are ready to proof the optimality of Algorithm\ 1.

\setcounter{theorem}{0}

\begin{theorem}[Optimality]\label{theorem:optimality} 
Given a credal set $\mathcal{Q}_\pi$ induced by a normalized possibility distribution $\pi : \mathcal{Y} \fromto [0,1]$ with $\max_{y \in \mathcal{Y}} \pi(y) = 1$ according to (1), Algorithm 1 returns the solution of $\mathcal{L}^*(\mathcal{Q}_\pi, \hat{p})$ as defined in (11) for an arbitrary distribution $\hat{p} \in \mathbb{P}(\mathcal{Y})$.
\end{theorem}

\begin{proof}
    Combining the three previous lemmas, as well as the fact that the solution of $\argmin_{p \in \mathcal{Q}_\pi} \kldiv{p}{\hatp}$ is always characterized by an extreme point on one of the faces of the convex polytope $\mathcal{Q}_\pi$ for $\hatp \not \in \mathcal{Q}_\pi$, leads to \cref{theorem:optimality}: In each iteration, we choose the optimal projection on the optimal face. Thus, we maintain the optimal probabilities $p^r(y)$ for all $y \not\in Y$.
    
    In case of $\hatp \in \mathcal{Q}_\pi$, Algorithm 1 returns $\kldiv{\hatp}{\hatp} = 0$, which is optimal by definition of $D_{KL}$.
\end{proof}

\subsection{Complexity}

The complexity of Algorithm\ 1 requires a proper specification of how $y^*$ is determined in the while loop. In our implementation, we sort the classes $y \in \mathcal{Y}=\{y_1, \ldots, y_K\}$ according to their possibilities $\pi(y)$ in a descending manner first, which can be done in $\mathcal{O}(K \log K)$. Then, the while loop iterates over the sorted classes and can continue with the next while-loop until a matching constraint $\pi(y^*)$ could be determined. This violation check involves iterating over all classes $y \in Y$ with $\pi(y) \leq \pi(y^*)$. By sorting the elements in advance, this becomes efficient.

\paragraph{Worst-Case Complexity} In the worst-case, every iteration of Algorithm\ 1 has to iterate over all remaining elements in $Y$. As said before, checking violations of the possibility constraints requires iterating over all involved classes. Thus, the worst-case complexity can be (loosely) bounded by
\begin{equation*}
    \sum_{i=0}^{K-1} (i + 1)(K - i) = \frac{K^3}{6} + \frac{K^2}{2} + \frac{K}{3} = \mathcal{O}(K^3) \enspace .
\end{equation*}

\paragraph{Average-Case Complexity} Although we are not providing a rigorous analysis of the average-case complexity here, we characterize the efficiency of our algorithm in several cases.

We can observe that the worst case applies whenever the projection of a query distribution $\hatp \not\in \mathcal{Q}_\pi$ on the convex polytope $\mathcal{Q}_\pi$ is the distribution $p^*$ with $p^*(y_i) = \pi(y_i) - \pi(y_{i-1})$ for all $i \in \{1, \ldots, K-1\}$ and $p^*(y_K) = \pi(y_K)$ for sorted classes $y_i$ according to their possibilities. This is the case when $\hatp$ is in the cone associated with this extreme point $p^*$ \citep{https://doi.org/10.48550/arxiv.2201.10161}, which is given by
\begin{equation*}
    \bigg\{ p \not\in \mathcal{Q}_\pi \given p^* \in \argmin_{p' \in \mathcal{Q}_\pi} \kldiv{p'}{p} \bigg\} \enspace .\footnote{More precisely, one would have to distinguish the cases where a $p^r$ projection as in \cref{eq:p_r} is perfectly matching the extreme point $p^*$, but we omit it here for simplicity.} 
\end{equation*}
This set, however, depends on the size of the faces resp. the credal set and is typically rather small. Moreover, it gets proportionally smaller with higher values of $K$.

In the (trivial) case of $\hatp \in \mathcal{Q}_\pi$, we achieve linear complexity $\mathcal{O}(K)$ as we have to check the possibility constraints for all $K$ classes only once. In the other cases, the complexity depends on the face on which we need to project $\hatp$: When projecting on $\text{face}_{\pi_i}$, we do not have to consider any class $y$ with $\pi(y) \leq \pi_i$. Roughly speaking, the larger the faces associated with high possibilities become, the higher the chance of (optimally) projecting directly on this face and not requiring any loop iterations over classes with smaller possibility values, leading to a sublinear amount of face projections and thus reducing the cubic complexity.

\end{document}